\documentclass{article}

\usepackage{microtype}
\usepackage[disable,textsize=tiny]{todonotes}
\usepackage{graphicx}
\usepackage{subcaption}
\usepackage{booktabs} 

\usepackage{hyperref}

\usepackage{natbib}
\usepackage[preprint]{icml2026}

\usepackage{amsmath}
\usepackage{amssymb}
\usepackage{mathtools}
\usepackage{amsthm}
\newtheorem{theorem}{Theorem}[section]

\newtheorem{lemma}[theorem]{Lemma}

\newtheorem{definition}[theorem]{Definition}
\newtheorem{assumption}[theorem]{Assumption}
\newtheorem{remark}[theorem]{Remark}

\def\*#1{\boldsymbol{#1}}

\newcommand{\signm}{\mathrm{sign}}  
\newcommand{\srank}{\mathrm{srank}} 

\newcommand{\mA}{\mathbf{A}}
\newcommand{\mB}{\mathbf{B}}
\newcommand{\mQ}{\mathbf{Q}}
\newcommand{\mK}{\mathbf{K}}
\newcommand{\mW}{\mathbf{W}}
\newcommand{\mU}{\mathbf{U}}
\newcommand{\mV}{\mathbf{V}}
\newcommand{\mS}{\mathbf{S}}
\newcommand{\mH}{\mathbf{H}}
\newcommand{\mX}{\mathbf{X}}
\newcommand{\mY}{\mathbf{Y}}

\newcommand{\mJ}{\mathbf{J}}

\newcommand{\va}{\mathbf{a}}

\newcommand{\vh}{\mathbf{h}}

\newcommand{\vs}{\mathbf{s}}
\newcommand{\vy}{\mathbf{y}}
\newcommand{\vx}{\mathbf{x}}
\newcommand{\vv}{\mathbf{v}}
\newcommand{\vz}{\mathbf{z}}
\newcommand{\vu}{\mathbf{u}}

\newcommand{\R}{\mathbb{R}}

\newcommand{\E}{\mathbb{E}}

\usepackage{enumitem}
\setlist{topsep=0pt, partopsep=0pt, itemsep=0pt, parsep=2pt}
\makeatletter
\renewcommand\paragraph{\@startsection{paragraph}{4}{\z@}%
  {0pt}%
  {-1em}%
  {\normalfont\normalsize\bfseries}}
\makeatother

\usepackage[capitalize,noabbrev]{cleveref}

\theoremstyle{plain}
\theoremstyle{definition}
\theoremstyle{remark}

\icmltitlerunning{MSign: An Optimizer Preventing Training Instability in Large Language Models via Stable Rank Restoration}

\begin{document}

\twocolumn[
  \icmltitle{
  MSign: An Optimizer Preventing Training Instability in Large Language Models via Stable Rank Restoration 
  }



  \icmlsetsymbol{equal}{*}
  \icmlsetsymbol{intern}{$\dagger$}

  \begin{icmlauthorlist}
    \icmlauthor{Lianhai Ren}{comp,nus,intern}
    \icmlauthor{Yucheng Ding}{comp,intern}
    \icmlauthor{Xiao Liu}{comp}
    \icmlauthor{Peng Cheng}{comp}
    \icmlauthor{Yeyun Gong}{comp}
  \end{icmlauthorlist}

  \icmlaffiliation{nus}{Department of Mathematics, National University of Singapore, 10 Lower Kent Ridge Rd, Singapore 119076}
  \icmlaffiliation{comp}{Microsoft SIGMA Team, Microsoft Research.}

  \icmlcorrespondingauthor{Peng Cheng}{pengc@microsoft.com}
  \icmlcorrespondingauthor{Yeyun Gong}{yegong@microsoft.com}

  \icmlkeywords{Machine Learning}

  \vskip 0.3in
]



\printAffiliationsAndNotice{$^\dagger$Work done during their internship at Microsoft Research.}

\begin{abstract}
Training instability remains a critical challenge in large language model (LLM) pretraining, often manifesting as sudden gradient explosions that waste significant computational resources.
We study training failures in a 5M-parameter NanoGPT model scaled via $\mu$P, identifying two key phenomena preceding collapse: (1) rapid decline in weight matrix stable rank (ratio of squared Frobenius norm to squared spectral norm), and (2) increasing alignment between adjacent layer Jacobians.
We prove theoretically that these two conditions jointly cause exponential gradient norm growth with network depth.
To break this instability mechanism, we propose MSign, a new optimizer that periodically applies matrix sign operations to restore stable rank.
Experiments on models from 5M to 3B parameters demonstrate that MSign effectively prevents training failures with a computational overhead of less than 7.0\%.
\end{abstract}

\section{Introduction}

Modern deep learning optimizers and training techniques -- including Adam~\citep{kingma2014adam}, layer normalization~\citep{ba2016layer}, and learning rate schedules -- generally enable reliable training. However, as the model scale increases, pretraining large language models (LLMs) becomes increasingly fragile. Training failures manifest as sudden, unrecoverable gradient explosions and corresponding loss growth, which are difficult to predict and can waste substantial computational resources~\citep{chowdhery2022palm,zhang2022opt}.
\paragraph{Empirical Investigation: Identifying the Failure Mechanism.}
We systematically study training failures using a 5M-parameter NanoGPT model derived from $\mu$P scaling~\citep{yang2022tensor}, which provides a controlled environment for reproducible failure analysis. Through extensive monitoring, we identify two critical phenomena that consistently precede training collapse (Figure~\ref{fig:weight-scale}):

\begin{itemize}
\item \textbf{Observation 1: Stable Rank Collapse.} The stable rank of weight matrices -- defined as $\srank(\mW) = \|\mW\|_F^2 / \|\mW\|_2^2$ -- declines sharply in the steps preceding failure. This indicates that spectral energy becomes concentrated in the top singular directions, reducing the ``effective dimensionality'' of the weight matrices.

\item \textbf{Observation 2: Jacobian Alignment Growth.} The alignment between adjacent layer Jacobians increases, meaning the top singular subspaces of consecutive layers become increasingly correlated. It prevents the typical cancellation effects in matrix products.
\end{itemize}

We prove that these two phenomena jointly cause training instability: low stable rank implies high layer Jacobian spectral norms (since $\|\mW\|_2 = \|\mW\|_F / \sqrt{\srank(\mW)}$ for fixed Frobenius norm), and high alignment ensures these norms multiply constructively across layers. The total Jacobian norm grows as $(aM)^L$ where $a$ is alignment, $M$ is layer Jacobian norm, and $L$ is depth, leading to exponential gradient explosion when $aM > 1$.

\paragraph{Our Solution: The MSign Optimizer.}
To break the stable rank collapse condition, we propose \textbf{MSign}, an optimizer that periodically applies the matrix sign operation $\signm(\mW) = \mU\mV^T$ (where $\mW = \mU\mS\mV^T$ is the SVD). This operation maximizes stable rank by equalizing all non-zero singular values to 1, while preserving the column and row spaces. We restore the original Frobenius norm after the operation to maintain training dynamics. MSign is applied every $P$ steps (typically $P=100$) to projection weights, with computational overhead below 7.0\%.

\paragraph{Experimental Validation.}
We validate MSign across four model configurations spanning 5M to 3B parameters: NanoGPT-5M~\footnote{\href{https://github.com/karpathy/nanoGPT}{https://github.com/karpathy/nanoGPT}}, Sigma-40M (hybrid MHA/MLA attention)~\cite{qu2025sigma,hu2025sigmatiny}, LLaMA-1B~\cite{kumar2025zclip}, and LLaMA-MoE-3B (mixture of experts). In all cases, baseline training with standard hyperparameters fails via gradient explosion, while training with MSign converges stably. The intervention maintains stable rank above critical thresholds, controls Jacobian alignment, and keeps gradient norms bounded. Ablation studies reveal that applying MSign to attention layers (particularly output projections) is sufficient, MLP-only application does not prevent failures.

\paragraph{Contributions.}
We summarize our contributions:
\begin{enumerate}
\item \textbf{Mechanism identification and theoretical analysis}: We identify stable rank collapse and Jacobian alignment as consistent precursors to LLM training failures, and prove that their combination causes exponential gradient growth with network depth.

\item \textbf{Practical solution}: We propose MSign, a new optimizer that prevents training failures by periodically restoring weight stable rank.

\item \textbf{Extensive validation}: We demonstrate MSign's effectiveness across diverse architectures (dense, MoE) and scales (5M--3B parameters) with minimal overhead.
\end{enumerate}
\section{Literature Review}
\paragraph{Training Instability in Large Language Models.}
Training instability in large language models has been widely observed across major LLM projects. \citet{chowdhery2022palm} documented loss spikes during PaLM training that required manual intervention and checkpoint rollbacks, while \citet{zhang2022opt} provided detailed logs of OPT-175B training showing dozens of restarts due to hardware failures and gradient explosions. \citet{zeng2022glm} reported similar challenges in GLM-130B training. Several factors have been proposed to explain instability: \citet{kaplan2020scaling} studied the sensitivity of training dynamics to learning rate in the context of scaling laws; \citet{pascanu2013difficulty} analyzed gradient clipping as a remedy for exploding gradients in RNNs; \citet{dong2021attention} identified attention entropy collapse as a failure mode in Vision Transformers. More recently, \citet{wortsman2023small} showed that small-scale proxy models can predict instabilities at larger scales, and \citet{molybog2023theory} proposed a theoretical framework connecting loss spikes to heavy-tailed gradient noise. However, these explanations typically address symptoms rather than underlying mechanisms.
\paragraph{Low-Rank Structure in Neural Networks.}
The prevalence of low-rank structure in neural network weights and gradients has been extensively documented. \citet{denil2013predicting} demonstrated that neural network weights exhibit significant redundancy, with up to 95\% of parameters predictable from the remaining 5\%. \citet{arora2019exact} proved that gradient descent in deep linear networks implicitly biases toward low-rank solutions. In the transformer context, \citet{hu2022lora} exploited low-rank structure for parameter-efficient fine-tuning, while \citet{galore2023} showed that gradients during transformer training maintain consistently low stable rank, motivating memory-efficient optimizers. \citet{huh2024platonic} found that representations across different models and modalities converge to similar low-dimensional structures. Our work extends these observations by connecting low-rank gradient structure directly to training instability through the stable rank mechanism.
\paragraph{Jacobian Analysis in Deep Networks.}
The role of Jacobians in neural network optimization has received substantial theoretical attention. Classical work by \citet{glorot2010understanding} established the importance of proper weight initialization for maintaining stable gradient flow. \citet{saxe2014exact} analyzed dynamics in deep linear networks through the lens of Jacobian singular values. \citet{pennington2017resnet} and \citet{yang2017mean} developed mean-field theories for Jacobian evolution in residual networks and general deep networks respectively. \citet{fort2019deep} empirically studied the relationship between Jacobian eigenvalue spectra and model trainability, finding that successful training correlates with specific spectral properties. \citet{xiao2018dynamical} proposed dynamical isometry as a condition for stable training. Our contribution is identifying inter-layer Jacobian \emph{alignment} as a key mechanism amplifying gradient explosion, distinct from prior work that focused on individual layer Jacobian norms.
\paragraph{Stable Rank and Matrix Analysis.}
Stable rank, introduced by \citet{rudelson2007sampling} in the context of random matrix theory, provides a continuous relaxation of matrix rank that is more robust to small perturbations. \citet{vershynin2018high} provides comprehensive treatment of stable rank properties and its applications in high-dimensional probability. In neural network analysis, stable rank has been used to derive generalization bounds: \citet{NeyshaburBMS17aa} used stable rank to obtain PAC-Bayes bounds. \citet{li2018measuring} analyzed intrinsic rank dynamics during optimization, finding that it correlates with model compressibility. \citet{sanyal2020stable} proposed stable rank regularization to improve generalization. Our work reveals a novel connection between stable rank dynamics and training stability, showing that stable rank collapse precedes and causes gradient explosion.
\begin{figure}[t]
    \begin{center}
    \includegraphics[width=0.7\linewidth]{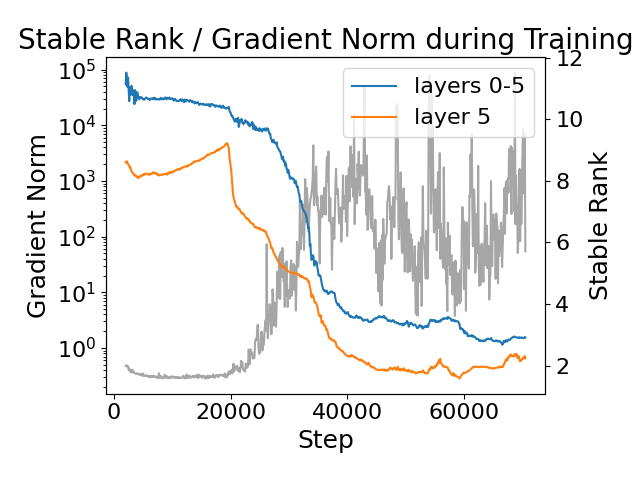}
    \includegraphics[width=0.7\linewidth]{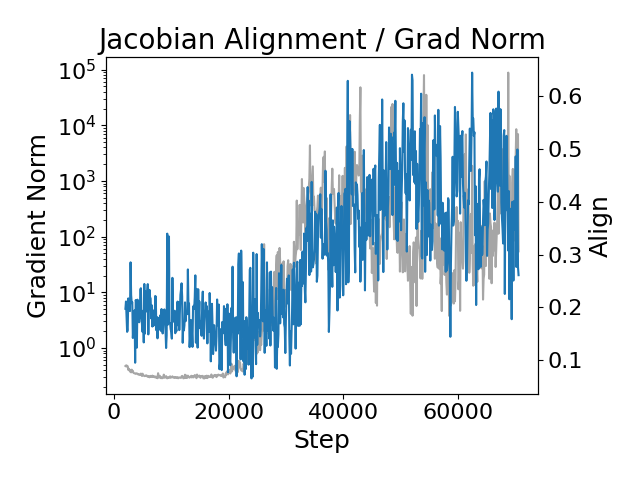}
    \caption{Correlation between training failure indicators and gradient norm explosion. \textbf{Left (Observation 1):} Stable rank (geometric mean across early layers) vs.\ gradient norm over training steps. As stable rank declines sharply around step 20000, gradient norms begin explosive growth. \textbf{Right  (Observation 2):} Jacobian alignment (average between adjacent layers) vs.\ gradient norm. Increasing alignment precedes and accompanies gradient explosion.}
    \label{fig:weight-scale}
    \end{center}
    \vspace{-10pt}
\end{figure}
\section{Empirical Observations: Training Failure Phenomena}
We identify reproducible training failures in transformer models under moderate learning rates. Our analysis reveals two consistent phenomena preceding training collapse.

\subsection{Experimental Setup}
We construct a reproducible failure scenario using a modified NanoGPT configuration with standard hyperparameters (refer to Appendix~\ref{sec:exp-details} for details).

From $\mu$P perspective~\cite{yang2022tensor}, this configuration corresponds to a $0.02$ std and $6\times 10^{-4}$ learning rate at $100$M scale, which is within typical ranges.
\paragraph{Model and Jacobian Notation.}
We consider a standard decoder-only transformer with $L$ stacked blocks. At each block $\ell$, hidden states $\mH^{(\ell-1)} \in \R^{T \times d}$ (sequence length $T$, hidden dimension $d$) are first processed by multi-head self-attention with query/key/value/output projections $\mW_Q^{(\ell)}, \mW_K^{(\ell)}, \mW_V^{(\ell)}, \mW_O^{(\ell)}$, and then by a position-wise MLP with weights $\mW_1^{(\ell)} \in \R^{d \times d_{\mathrm{ff}}}$ and $\mW_2^{(\ell)} \in \R^{d_{\mathrm{ff}} \times d}$, with residual connections and LayerNorm around each sublayer:
\begin{equation}
    \mH^{(\ell)} = F^{(\ell)}(\mH^{(\ell-1)}), \quad \ell = 1, \dots, L.
\end{equation}
We define the \emph{layer Jacobian} as $\mJ^{(\ell)} = \frac{\partial \, \mathrm{vec}(\mH^{(\ell)})}{\partial \, \mathrm{vec}(\mH^{(\ell-1)})}$. Later references to the stable ranks of $\mW_Q, \mW_K, \mW_V, \mW_O, \mW_1, \mW_2$ and to "Jacobian alignment between adjacent layers" all refer to this architecture and these $\mJ^{(\ell)}$.
\subsection{Observations during training failure}

\paragraph{Observation 1: Sharp Stable Rank Decline} The first observable phenomenon is the rapid decline in weight stable rank. As shown in Figure~\ref{fig:weight-scale} (left panel), the geometric averaged stable rank of the first several layers drops sharply around step 20000, preceding the gradient explosion. This phenomenon indicates energy concentration in top singular values.

\begin{definition}[Stable Rank]
The stable rank of a matrix $\mW \in \mathbb{R}^{m \times n}$ is:
\begin{equation}
\srank(\mW) = \frac{\|\mW\|_F^2}{\|\mW\|_2^2} = \frac{\sum_i s_i^2}{s_1^2},
\end{equation}
where $s_1 \geq s_2 \geq \cdots \geq 0$ are the singular values.
\end{definition}

\paragraph{Observation 2: Increasing Jacobian Alignment Between Adjacent Layers}
The second critical phenomenon is the increasing alignment between Jacobians of adjacent layers, as illustrated in Figure~\ref{fig:weight-scale} (right panel).
\begin{definition}[Matrix Alignment]
For any two matrices $\mA, \mB$ for which the product $\mA\mB$ is well-defined, their alignment is the cosine similarity between the top right singular vector of $\mA$ and the top left singular vector of $\mB$:
\begin{equation}
\text{Align}(\mA, \mB) = |\vv_{A,1}^T \vu_{B,1}|,
\end{equation}
where $\vv_{A,1}$ is the first right singular vector of $\mA$ and $\vu_{B,1}$ is the first left singular vector of $\mB$. For simplicity, we denote the alignment of Jacobians for adjacent layers, $\text{Align}(\mJ^{(\ell+1)}, \mJ^{(\ell)})$, as $\text{Align}(\ell+1, \ell)$.
\end{definition}

High alignment correlates strongly with both weight scale growth and stable rank decline, as well as gradient growth during the failure phase.

\subsection{Conjecture: Low Stable Rank + Jacobian Alignment Drives Training Failure}

Based on these two consistent phenomena, we conjecture that the combination of low weight stable rank and high Jacobian alignment creates a destabilizing feedback mechanism that leads to training failure. The remainder of this paper develops this hypothesis theoretically and proposes a solution.

\section{Theoretical Analysis: Understanding the Failure Mechanism}

We now provide theoretical analysis to explain the observed phenomena and their causal relationships. Our analysis is divided into two parts: (1) explaining why Jacobian alignment and low stable rank lead to training failure, and (2) analyzing the positive feedback mechanism that prevents stable rank from increasing and accelerates its collapse.

\subsection{Part I: From Observations to Training Failure}

In this section, we establish the causal chain: \textbf{Low stable rank + Jacobian alignment} $\Rightarrow$ \textbf{High total Jacobian norm} $\Rightarrow$ \textbf{Large weight gradient norm} (\textbf{Training instability}).

\begin{remark}[Simplifying Assumption]
We adopt the standard assumption that large gradient norms indicate training instability~\citep{pascanu2013difficulty,philipp2018exploding}. Our analysis derives lower bounds for gradient norms, with the understanding that such bounds indicate increased risk of divergence.
\end{remark}

\subsubsection{High Jacobian Alignment + High Layer Jacobian Norm $\Rightarrow$ High Total Jacobian Norm}

The total Jacobian $\mJ_{total} = \prod_{\ell=1}^L \mJ^{(\ell)}$ determines how perturbations at the input propagate to the output. In general, the norm of a matrix product can be much smaller than the product of norms due to cancellation effects. However, when the singular subspaces of adjacent Jacobians are \emph{aligned}, these cancellations are suppressed, and the norms multiply constructively.

To formalize this, we use the definition of matrix alignment (Definition 3.2).
This quantity measures how well the "output direction" of $\mJ^{(\ell)}$ matches the "input direction" of $\mJ^{(\ell+1)}$, determining whether the composition $\mJ^{(\ell+1)}\mJ^{(\ell)}$ preserves or diminishes the spectral norm.

\begin{theorem}[Jacobian Product Norm Lower Bound]
\label{thm:jacobian-product}
For a deep network with $L$ layers, let $\mJ^{(\ell)} = \frac{\partial \vh^{(\ell)}}{\partial \vh^{(\ell-1)}}$ denote the Jacobian at layer $\ell$. If each layer Jacobian satisfies $\|\mJ^{(\ell)}\|_2 \geq M$ and the alignment between adjacent Jacobians satisfies $\text{Align}(\ell+1, \ell) \geq a > 0$ for all $\ell$, then the total Jacobian from input to output has 2-norm:
\begin{equation}
\|\mJ_{total}\|_2 = \|\mJ^{(L)} \mJ^{(L-1)} \cdots \mJ^{(1)}\|_2 \geq \frac{(aM)^L}{a}.
\end{equation}
\end{theorem}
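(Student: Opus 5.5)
The plan is to test $\mJ_{total}$ against well-chosen unit vectors, using $\|\mA\|_2 \ge |\vx^T \mA \vy|$ for unit $\vx,\vy$, and to peel off the alignment factors one layer at a time. Write $\mJ^{(\ell)} = \sum_i \sigma^{(\ell)}_i \vu^{(\ell)}_i (\vv^{(\ell)}_i)^T$, with $\sigma^{(\ell)}_1 = \|\mJ^{(\ell)}\|_2 \ge M$. By the Definition of Matrix Alignment, $\text{Align}(\ell+1,\ell) = |(\vv^{(\ell+1)}_1)^T \vu^{(\ell)}_1| \ge a$.

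\textbf{Base case $L=2$.} Take $\vy = \vv^{(1)}_1$ and $\vx = \vu^{(2)}_1$. Then
\begin{equation*}
(\vu^{(2)}_1)^T \mJ^{(2)}\mJ^{(1)} \vv^{(1)}_1 = \sigma^{(2)}_1 \sigma^{(1)}_1 \, (\vv^{(2)}_1)^T \vu^{(1)}_1 .
\end{equation*}
Its absolute value is at least $aM^2 = (aM)^2/a$, which is exactly the claimed bound. This case is clean and I would present it in full.

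\textbf{Inductive step.} Propagate $\vz_0 = \vv^{(1)}_1$ through the layers, setting $\vz_\ell = \mJ^{(\ell)}\vz_{\ell-1}$, and decompose each $\vz_\ell$ into its component along $\vu^{(\ell)}_1$ plus a remainder. The leading component picks up a factor of at least $\sigma^{(\ell+1)}_1 a \ge aM$ at each step. After $L$ steps this would give $M\cdot(aM)^{L-1} = (aM)^L/a$, provided the remainder terms never cancel the leading term. The key lemma I would try to establish is that the coefficient of $\vz_\ell$ along $\vu^{(\ell)}_1$ is at least $M(aM)^{\ell-1}$ in absolute value.

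\textbf{Main obstacle.} That lemma is where the argument breaks down for $L \ge 3$. The remainder of $\vz_\ell$ lies along the lower left singular vectors $\vu^{(\ell)}_i$, $i\ge2$, and these can have nonzero inner product with $\vv^{(\ell+1)}_1$ of opposite sign to the leading term. The stated hypotheses, which only constrain top singular vectors of adjacent pairs, do not rule this out.

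Indeed, I would first check whether the statement holds as worded, and a three-layer example in $\R^2$ appears to refute it. Take $a^2 \le 1/2$ and $\mJ^{(1)} = M\,\mathbf{e}_1\mathbf{e}_1^T$. Choose $\mJ^{(2)}$ with $\sigma_1 = M$, $|(\vv^{(2)}_1)^T\mathbf{e}_1| = a$, and second singular value $\sigma' = Ma^2/(1-a^2) \le M$. Choose $\mJ^{(3)}$ of rank one, with $\vv^{(3)}_1$ having coefficient $a$ on $\vu^{(2)}_1$ and $-\sqrt{1-a^2}$ on $\vu^{(2)}_2$. With the signs chosen suitably, this gives $\mJ^{(3)}\mJ^{(2)}\mJ^{(1)} = 0$, while every hypothesis of the theorem holds.

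So my plan is to prove $L=2$ rigorously, then carry out the induction while stating explicitly what the paper's argument must be implicitly assuming. Two candidate extra hypotheses are: the remainders are orthogonal to the next top right singular vector, which holds for instance for rank-one-dominated Jacobians; or the cross terms have consistent sign. I would flag that without some such condition the bound cannot hold for $L\ge3$.
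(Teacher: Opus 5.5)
Your proposal is correct. It is the paper's proof that has the gap, not yours.

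\textbf{Where the paper's argument fails.} Your base case is the paper's two-matrix lemma, $\|\mA\mB\|_2\ge\|\mA\|_2\|\mB\|_2\,|\vv_{A,1}^T\vu_{B,1}|$. The paper proves it by testing on $\vv_{B,1}$, just as you test on $\vv^{(1)}_1$ and $\vu^{(2)}_1$. The paper then recurses: it applies the lemma with $\mA=\mJ^{(L)}$ and $\mB=\mJ^{(L-1)}\cdots\mJ^{(1)}$, and writes the resulting factor as $\text{Align}(L,L-1)$. But the lemma actually yields $|(\vv^{(L)}_1)^T\vu_{B,1}|$, where $\vu_{B,1}$ is the top left singular vector of the \emph{partial product}. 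Nothing in the hypotheses ties $\vu_{B,1}$ to $\vu^{(L-1)}_1$.

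\textbf{Your counterexample checks out.} It is precisely this failure. Take $\vv^{(2)}_1=(a,\sqrt{1-a^2})^T$, $\vv^{(2)}_2=(\sqrt{1-a^2},-a)^T$, $\vv^{(3)}_1=a\vu^{(2)}_1-\sqrt{1-a^2}\,\vu^{(2)}_2$, and $\|\mJ^{(3)}\|_2=M$. Then $(\vv^{(3)}_1)^T\mJ^{(2)}\mathbf{e}_1=Ma^2-\sigma'(1-a^2)=0$. So $\mJ^{(2)}\mJ^{(1)}$ is rank one with left singular vector orthogonal to $\vv^{(3)}_1$. Both alignments equal $a$, all norms equal $M$, and $\mJ^{(3)}\mJ^{(2)}\mJ^{(1)}=0$. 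The statement is therefore false for $L\ge 3$.

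The hypothesis the paper's recursion actually uses is $\text{Align}(\mJ^{(\ell+1)},\mJ^{(\ell)}\cdots\mJ^{(1)})\ge a$ for each $\ell$. Under that hypothesis, the two-matrix lemma iterates verbatim to $M^La^{L-1}$.

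\textbf{Two corrections to your write-up.}
\begin{enumerate}
\item Take $a^2<1/2$ strictly. At $a^2=1/2$ you get $\sigma'=M$, so $\mJ^{(2)}$ is $M$ times an orthogonal matrix. Its top singular vectors, and hence the alignments, are then undefined.
\item The failure is not confined to $a\le 1/\sqrt{2}$ or to exact cancellation, and this affects your proposed repairs.
\begin{itemize}
\item Use the same construction with any $\sigma'\in(0,M)$. The product has norm $M^2|Ma^2-\sigma'(1-a^2)|$.
\item This is strictly below $a^2M^3=(aM)^3/a$ whenever $\sigma'(1-a^2)<2Ma^2$. Such $\sigma'$ exist for every $a\in(0,1)$.
\item So one nonzero lower singular value in an intermediate Jacobian, with the adverse sign, already breaks the exact constant. ``Rank-one-dominated'' Jacobians therefore do not suffice.
\end{itemize}
\end{enumerate}

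Your orthogonality-of-remainders condition does make the induction go through. It holds, for example, when $\mJ^{(2)},\dots,\mJ^{(L-1)}$ are exactly rank one. Alternatively, replace the adjacent-pair alignment with the partial-product alignment above, which is the cleaner repair.
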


The proof is provided in Appendix~\ref{proof:jacobian-product}.

\begin{remark}[Exponential Growth Condition]
The key insight is that when $aM > 1$, the lower bound in Theorem~\ref{thm:jacobian-product} grows \emph{exponentially} with depth $L$, providing a sufficient condition for large total Jacobian norms. Observation 1 (stable rank collapse) suggests that $M$ can become large (via Theorem~\ref{thm:srank-jacobian} under approximately fixed Frobenius norms), and Observation 2 shows that $a$ tends to increase during training. In the failure regimes we study, these trends empirically drive $aM$ above $1$, which is consistent with the observed gradient explosion, although we do not claim that $aM>1$ holds at all times or is necessary for failure.
\end{remark}

\subsubsection{Low Stable Rank $\Rightarrow$ High Layer Jacobian Norm}

We analyze the relationship between stable rank and layer Jacobian norm for the three primary layer types in transformers.

\paragraph{Linear Layers.}

\begin{theorem}[Stable Rank Controls Jacobian Norm: Linear Layer]
\label{thm:srank-jacobian}
For a linear layer with weight matrix $\mW \in \R^{m \times n}$, given fixed Frobenius norm $\|\mW\|_F = F$, the operator norm satisfies:
\begin{equation}
\|\mW\|_2 = \frac{F}{\sqrt{\srank(\mW)}}.
\end{equation}
\end{theorem}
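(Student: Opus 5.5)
This statement is essentially a rearrangement of the definition of stable rank (Definition 3.1), so the plan is to unpack the definitions and solve for the operator norm. The one piece of content is that the layer's Jacobian with respect to its input is the weight matrix itself, so $\|\mW\|_2$ is the layer Jacobian norm that enters Theorem~\ref{thm:jacobian-product}.

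\textbf{Step 1: the Jacobian of the layer.} For the linear map $\vx \mapsto \mW\vx$ (or $\vx \mapsto \mW\vx + \vs$ with a bias), the Jacobian is exactly $\mW$. For the row-batched form $\mH \mapsto \mH\mW$ acting on $\mathrm{vec}(\mH)$, the Jacobian is $\mW^T \otimes \mathbf{I}$. This has the same singular values as $\mW$, repeated, so its spectral norm is again $\|\mW\|_2$.

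\textbf{Step 2: write the norms through the SVD.} Take $\mW = \mU\mS\mV^T$ with singular values $s_1 \ge s_2 \ge \cdots \ge 0$. Orthogonal invariance gives $\|\mW\|_F^2 = \sum_i s_i^2 = F^2$ and $\|\mW\|_2 = s_1$.

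\textbf{Step 3: solve for the spectral norm.} By Definition 3.1, $\srank(\mW) = F^2/s_1^2$. Solving gives $s_1^2 = F^2/\srank(\mW)$, and taking the positive square root yields $\|\mW\|_2 = F/\sqrt{\srank(\mW)}$.

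\textbf{Degenerate case and interpretation.} If $\mW = 0$, the stable rank is undefined, so the statement should assume $F > 0$; then $s_1 > 0$ and the division is legitimate. Since $1 \le \srank(\mW) \le \min(m,n)$, the identity also gives the range $F/\sqrt{\min(m,n)} \le \|\mW\|_2 \le F$. The lower end is attained exactly when all singular values are equal, i.e.\ at a scaled $\signm(\mW)$; this is the link to MSign.

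There is no real obstacle here. The only point needing care is the matching of Jacobian conventions (vector versus vectorized matrix input) in Step 1, which the Kronecker-product singular-value fact settles.
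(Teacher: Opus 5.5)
Your proposal is correct and follows the paper's proof. Both identify the layer Jacobian with $\mW$, write $\|\mW\|_2 = s_1$ with $\|\mW\|_F = F$, and solve $\srank(\mW) = F^2/\|\mW\|_2^2$ for $\|\mW\|_2$. Your extras are sound additions (the Kronecker-product remark for the vectorized convention, and requiring $F>0$), but the lower end $F/\sqrt{\min(m,n)}$ is attained by a scaled $\signm(\mW)$ only when $\mW$ has full rank $\min(m,n)$.
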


The proof is provided in Appendix~\ref{proof:srank-jacobian}. This establishes the basic principle: as stable rank decreases while the Frobenius norm is kept fixed (or approximately fixed over short training windows, e.g., under Adam-like updates with bounded step sizes), the operator norm increases proportionally.

\paragraph{Attention Layers.}

\begin{theorem}[Jacobian Norm Bound: Attention Layer]
    \label{thm:jacobian-attention}
Consider a single-head attention layer with projections $\mW_Q, \mW_K, \mW_V, \mW_O \in \R^{d \times d_k}$. Let $\mH \in \R^{n \times d}$ be the input, $\mA = \text{softmax}\left(\frac{\mH\mW_Q \mW_K^T\mH^T}{\sqrt{d_k}}\right)$ be the attention matrix. The Jacobian of the attention output $\mY = \mA \mH\mW_V \mW_O$ satisfies:
\begin{equation}
\begin{aligned}
    \left\|\frac{\partial \mY}{\partial \mH}\right\|_2 \leq& \|\mA\|_2 \|\mW_V\|_2 \|\mW_O\|_2\\    &+ \left\|\frac{\partial \mA}{\partial \mH}\right\|_2\|\mH\|_2 \|\mW_V\|_2 \|\mW_O\|_2.
\end{aligned}
\end{equation}
Defining the logit margin $\gamma_{\min} = \min_i (\max_j \mS_{i,j} - \text{second\_max}_j \mS_{i,j})$ where $\mS = \mH \mW_Q \mW_K^T \mH^T$, the attention gradient pathway is bounded by:
\begin{equation}
\left\|\frac{\partial \mA}{\partial \mH}\right\|_2 \leq \frac{4 \min ((n-1)e^{-\gamma_{\min}},1)}{\sqrt{d_k}} \|\mW_Q\|_2 \|\mW_K\|_2.
\end{equation}
\end{theorem}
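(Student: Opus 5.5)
The plan has two parts: a product-rule split of $\mY = \mA\mH\mW_V\mW_O$ for the first inequality, and a chain-rule analysis of the row-wise softmax for the second.

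\textbf{First inequality.} Differentiate $\mY$ in a direction $\mathbf{E}\in\R^{n\times d}$:
\begin{equation}
d\mY = \mA\,\mathbf{E}\,\mW_V\mW_O + (d\mA[\mathbf{E}])\,\mH\mW_V\mW_O .
\end{equation}
Bound the operator norm of each term using submultiplicativity, $\|\mX\mathbf{E}\mathbf{Z}\|\le\|\mX\|_2\|\mathbf{E}\|\,\|\mathbf{Z}\|_2$.

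For the first term this gives $\|\mA\|_2\|\mW_V\|_2\|\mW_O\|_2\|\mathbf{E}\|$. Here we measure the perturbation in Frobenius norm and read $\|\partial\mY/\partial\mH\|_2$ as the induced operator norm on $\mathrm{vec}$. This matches the Kronecker form $\mW_O^T\mW_V^T\otimes\mA$, whose spectral norm factorizes.

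For the second term, $\|d\mA[\mathbf{E}]\|\le\|\partial\mA/\partial\mH\|_2\|\mathbf{E}\|$, which yields the second summand. The triangle inequality then gives the claimed bound.

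\textbf{Softmax pathway.} Write $\mA=\mathrm{softmax}(\mS/\sqrt{d_k})$ row-wise, so
\begin{equation}
d\mS=\mathbf{E}\mW_Q\mW_K^T\mH^T+\mH\mW_Q\mW_K^T\mathbf{E}^T.
\end{equation}
Row $i$ of $d\mA$ is $(\mathrm{diag}(\va_i)-\va_i\va_i^T)\,d\vs_i/\sqrt{d_k}$. The chain rule then gives
\begin{equation}
\|\partial\mA/\partial\mH\|_2\le \frac{1}{\sqrt{d_k}}\,\max_i\|\mathrm{diag}(\va_i)-\va_i\va_i^T\|_2\cdot 2\|\mW_Q\|_2\|\mW_K\|_2\|\mH\|_2 .
\end{equation}
The factor $2$ comes from the two terms of $d\mS$.

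\textbf{Softmax Jacobian bound (main obstacle).} The key lemma is that for a probability vector $\va$ with largest entry $p_1$, the matrix $\mathrm{diag}(\va)-\va\va^T$ is PSD with spectral norm at most its trace, $\sum_j a_j(1-a_j)\le 2(1-p_1)$.

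The trivial bound gives $1-p_1\le 1$. If the largest logit exceeds every other by at least the margin (scaled logits), then
\begin{equation}
1-p_1=\frac{\sum_{j\neq 1}e^{s_j}}{\sum_j e^{s_j}}\le (n-1)e^{-\gamma}.
\end{equation}
Hence the norm is at most $2\min((n-1)e^{-\gamma_{\min}},1)$, and together with the factor $2$ from $d\mS$ this produces the constant $4$.

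\textbf{Remaining issues.} The stray factor $\|\mH\|_2$ from $d\mS$ must be handled. Following the theorem's form, I would absorb it into the definition of the pathway, since $\|\mH\|_2$ already multiplies explicitly in the first inequality; if it must appear here, we can instead normalize $\|\mH\|_2\le 1$ via LayerNorm-scale conventions. I would also clarify whether $\gamma_{\min}$ refers to scaled or unscaled logits, using the scaled version for the exponential bound to be valid. These bookkeeping points, not the calculus, are where I expect the real difficulty.
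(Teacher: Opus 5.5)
Your computation follows the paper's proof step for step: the product-rule split of $d\mY$, the PSD softmax Jacobian $\mathrm{diag}(a_i)-a_ia_i^T$ bounded by its trace and then by $2(1-\max_j a_{i,j})$, the margin estimate, the factor $2$ from the bilinear logits, and the chain rule. Two of your choices are more careful than the paper's. First, measuring perturbations in Frobenius norm makes the ``row-wise, hence block-diagonal, hence norm equals the maximum over rows'' step valid. With spectral-to-spectral norms on matrices, as the paper uses, a row-wise map can exceed its largest block norm by a factor up to $\sqrt{n}$. Second, your scaling remark is right: the softmax is evaluated at $\mS/\sqrt{d_k}$, so the exponent should involve $\gamma_{\min}/\sqrt{d_k}$. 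The paper's proof silently evaluates the softmax derivative at the unscaled logits.

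The gap is your plan for the stray $\|\mH\|_2$. Neither fix works, and no bookkeeping can remove it. The $\|\mH\|_2$ in the first inequality comes from the value path, $(d\mA[\mathbf{E}])\,\mH\mW_V\mW_O$. The one from $d\mS$ is a second, independent factor, so the attention pathway genuinely scales like $\|\mH\|_2^2$. That is exactly how the paper's own derivation ends: its final display carries $\|\mH\|_2^2$, so the paper also proves the second bound only with an extra $\|\mH\|_2$. Normalizing to $\|\mH\|_2\le 1$ is an added hypothesis, not a LayerNorm convention. LayerNorm and RMSNorm give each row Euclidean norm about $\sqrt{d}$ before the learned gain, so typically $\|\mH\|_2\gtrsim\sqrt{d}$.

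In fact the second display is false as written. Take $n=2$, $d=d_k=1$, $\mW_Q=\mW_K=1$, $\mH=(h,h)^T$. All logits equal $h^2$, so $\gamma_{\min}=0$ and the right-hand side is $4$. For $\mathbf{E}=(\delta_1,\delta_2)^T$ one gets $d\mA=\tfrac{h}{4}(\delta_1-\delta_2)\,(1,1)^T(1,-1)$, hence $\|\partial\mA/\partial\mH\|_2=h/\sqrt{2}$, which exceeds $4$ once $h>4\sqrt{2}$. What your argument (and the paper's) actually establishes, and what you should state, is
\[
\left\|\frac{\partial \mA}{\partial \mH}\right\|_2\le\frac{4\min\bigl((n-1)e^{-\gamma_{\min}/\sqrt{d_k}},1\bigr)}{\sqrt{d_k}}\,\|\mW_Q\|_2\,\|\mW_K\|_2\,\|\mH\|_2 ,
\]
with the first inequality unchanged.
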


The proof is provided in Appendix~\ref{proof:jacobian-attention}. Substituting $\|\mW_i\|_2 = \|\mW_i\|_F / \sqrt{\srank(\mW_i)}$ shows that low stable rank in any projection matrix amplifies the Jacobian norm.

\paragraph{MLP Layers.}

\begin{theorem}[Jacobian Norm Bound: MLP Layer]
\label{thm:jacobian-mlp}
Consider a two-layer MLP with weights $\mW_1 \in \R^{d \times d_{\text{ff}}}$, $\mW_2 \in \R^{d_{\text{ff}} \times d}$, and activation $\phi$. The Jacobian satisfies:
\begin{equation}
\left\|\mJ_{\text{MLP}}\right\|_2 \leq \frac{L_\phi \|\mW_1\|_F \|\mW_2\|_F}{\sqrt{\srank(\mW_1) \cdot \srank(\mW_2)}},
\end{equation}
where $L_\phi$ is the Lipschitz constant of $\phi$ ($L_\phi \approx 1.13$ for GELU, $L_\phi \approx 1.1$ for SiLU).
\end{theorem}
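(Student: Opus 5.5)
We would prove the bound by writing the MLP as a composition, applying the chain rule, and using submultiplicativity of the spectral norm. Concretely, take the MLP map (acting on a single token's hidden state, the position-wise case) to be $f(\vh) = \phi(\vh \mW_1)\mW_2$, or in column convention $f(\vh) = \mW_2^T \phi(\mW_1^T \vh)$. Its Jacobian at $\vh$ is
\begin{equation}
\mJ_{\text{MLP}} = \mW_2^T \, \mathrm{diag}\bigl(\phi'(\mW_1^T \vh)\bigr)\, \mW_1^T .
\end{equation}
Since the MLP acts independently on each of the $T$ tokens, the Jacobian with respect to $\mathrm{vec}(\mH)$ is block diagonal with these per-token blocks. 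Its spectral norm is therefore the maximum over tokens, and it suffices to bound a single block.

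The key steps, in order, are as follows. First, apply submultiplicativity to get $\|\mJ_{\text{MLP}}\|_2 \le \|\mW_2\|_2 \, \|\mathrm{diag}(\phi'(\cdot))\|_2 \, \|\mW_1\|_2$, using that transposition preserves the spectral norm. Second, bound the middle factor: a diagonal matrix has spectral norm equal to its largest entry in absolute value, and $|\phi'(z)| \le L_\phi$ for every $z$ because $\phi$ is $L_\phi$-Lipschitz. This gives $\|\mathrm{diag}(\phi'(\cdot))\|_2 \le L_\phi$. For GELU and SiLU, $\phi$ is differentiable everywhere, so no subgradient issue arises; for a general Lipschitz $\phi$, one would say the bound holds wherever the derivative exists. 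Third, invoke Theorem~\ref{thm:srank-jacobian} (equivalently, the definition of stable rank) for each weight: $\|\mW_i\|_2 = \|\mW_i\|_F/\sqrt{\srank(\mW_i)}$ for $i=1,2$. Multiplying these gives exactly the claimed bound.

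There is essentially no hard step. The only point needing care is the stated numerical constants, which require checking $\sup_z|\phi'(z)|$ for GELU and SiLU.
\begin{itemize}
\item For GELU, $\phi'(z) = \Phi(z) + z\varphi(z)$, whose maximum at $z=\sqrt{2}$ is about $1.13$.
\item For SiLU, $\phi'(z)=\sigma(z)(1+z(1-\sigma(z)))$, whose maximum is about $1.10$.
\end{itemize}
Both are found by setting the second derivative to zero and evaluating numerically.

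A secondary remark, not needed for the inequality itself: if the MLP sublayer is wrapped in a residual connection and LayerNorm, the bound applies to the MLP branch alone. The full block Jacobian would then pick up an additive identity term and a LayerNorm Jacobian factor, and we would state that the theorem concerns the branch.
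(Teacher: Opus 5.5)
Your proposal is correct and follows the paper's proof: the chain rule gives $\mW_1\,\mathrm{diag}(\phi'(\vz))\,\mW_2$ (the paper's row-vector form, the transpose of your Jacobian, with the same spectral norm), followed by submultiplicativity, $\|\mathrm{diag}(\phi'(\vz))\|_2 \le L_\phi$, and substitution of $\|\mW_i\|_2 = \|\mW_i\|_F/\sqrt{\srank(\mW_i)}$. Your per-token block-diagonal remark and your placement of the GELU derivative's maximum at $z=\sqrt{2}$ are more careful than the paper, which places it near $z\approx -0.75$.
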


The proof is provided in Appendix~\ref{proof:jacobian-mlp}. Across all layer types, the Jacobian norm is inversely related to the square root of stable rank, creating conditions for exponential gradient growth when combined with Jacobian alignment (Theorem~\ref{thm:jacobian-product}).

\subsubsection{High Total Jacobian Norm $\Rightarrow$ Large Weight Gradient Norm}

We now show that high total Jacobian norms translate into large weight gradients. By the chain rule, the gradient with respect to weight vector $\hat{\vv}_{out}^{(i)}$ at layer $i$ decomposes as:
\begin{equation}
\frac{\partial L}{\partial \hat{\vv}_{out}^{(i)}} = \left(\frac{\partial \vh^{(i)}}{\partial \hat{\vv}_{out}^{(i)}}\right)^T \left(\frac{\partial \vh^{(L)}}{\partial \vh^{(i)}}\right)^T \frac{\partial L}{\partial \vh^{(L)}}.
\label{eq:grad-decomp}
\end{equation}

\begin{assumption}[Gradient Alignment Conditions (Informal)]
\label{assum:grad-align}
We assume: (1) uniform local gradient lower bound $\gamma > 0$; (2) local-Jacobian alignment: $\frac{\partial \vh^{(i)}}{\partial \hat{\vv}_{out}^{(i)}}$ is aligned with the top right singular direction of the \emph{local} layer Jacobian $\mJ^{(i+1)}$, with alignment $\geq a$; (3) terminal alignment: the last layer Jacobian $\mJ^{(L)}$'s top left singular direction is aligned with the loss gradient $\frac{\partial L}{\partial \vh^{(L)}}$, with alignment $\geq a$. The formal statement is provided in Assumption~\ref{assum:grad-align-formal} in the Appendix.
\end{assumption}

\begin{theorem}[Weight Gradient Norm Lower Bound]
\label{thm:jacobian-to-gradient}
Under Assumption~\ref{assum:grad-align}, combined with Theorem~\ref{thm:jacobian-product} when $\|\mJ^{(\ell)}\|_2 \geq M>1$ and alignment $\geq a$:
\begin{equation}
\left\|\frac{\partial L}{\partial \hat{\vv}_{out}^{(i)}}\right\|_2 \geq a \gamma (aM)^{L-i} \cdot \left\|\frac{\partial L}{\partial \vh^{(L)}}\right\|_2.
\end{equation}
\end{theorem}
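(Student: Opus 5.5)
We start from the chain-rule decomposition \eqref{eq:grad-decomp} and bound the three factors one at a time, reading the product from the right. Write $\vy = \frac{\partial L}{\partial \vh^{(L)}}$ and $\mJ_{i\to L} = \mJ^{(L)}\cdots\mJ^{(i+1)}$, so the gradient equals $\mathbf{G}^T \mJ_{i\to L}^T \vy$ with $\mathbf{G} = \frac{\partial \vh^{(i)}}{\partial \hat{\vv}_{out}^{(i)}}$. The vector $\mJ_{i\to L}^T\vy$ is obtained by propagating $\vy$ backward through $L-i$ layer Jacobians. We want to show that it retains a component of size at least $(aM)^{L-i}\|\vy\|_2$ along the top right singular direction $\vv_{i+1,1}$ of $\mJ^{(i+1)}$. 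Then $\mathbf{G}^T$, being aligned with that direction, transfers this component with a factor $\gamma$ (from condition (1)) and alignment $a$ (from condition (2)).

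\textbf{Step 1 (terminal step).} Decompose $\vy$ along the top left singular vector $\vu_{L,1}$ of $\mJ^{(L)}$. By condition (3) of Assumption~\ref{assum:grad-align}, $|\vu_{L,1}^T\vy| \ge a\|\vy\|_2$. Hence $\mJ^{(L)T}\vy$ has component $s_1(\mJ^{(L)})\,\vu_{L,1}^T\vy$ along $\vv_{L,1}$. Because the right singular vectors are orthonormal, the projection of $\mJ^{(L)T}\vy$ onto $\vv_{L,1}$ is exactly $s_1\,\vu_{L,1}^T\vy$, whose magnitude is at least $M a\|\vy\|_2$.

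\textbf{Step 2 (induction across layers).} Suppose the backpropagated vector $\vz_\ell$ after passing through $\mJ^{(\ell)}$ has projection of magnitude at least $c_\ell$ onto $\vv_{\ell,1}$. By the alignment hypothesis, $\text{Align}(\ell,\ell-1)\ge a$, so $|\vv_{\ell,1}^T\vu_{\ell-1,1}|\ge a$. We then track the projection of $\vz_\ell$ onto $\vu_{\ell-1,1}$ and multiply by $s_1(\mJ^{(\ell-1)})\ge M$, giving $c_{\ell-1}\ge aM c_\ell$. This is the same mechanism as in the proof of Theorem~\ref{thm:jacobian-product}, which we take as given and reuse in its "top singular direction tracking" form rather than only its norm conclusion. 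Iterating from layer $L$ down to layer $i+1$ yields a projection onto $\vv_{i+1,1}$ of magnitude at least $a^{L-i}M^{L-i}\|\vy\|_2$, equivalently $(aM)^{L-i}\|\vy\|_2$. Here the terminal alignment supplies one factor of $a$ and each of the $L-i-1$ inter-layer transitions supplies one more.

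\textbf{Step 3 (local factor).} By condition (2), $\mathbf{G}$ is aligned with $\vv_{i+1,1}$ with alignment at least $a$. By condition (1), the local gradient has magnitude at least $\gamma$ along that direction. Applying $\mathbf{G}^T$ and taking norms therefore gives the extra factor $a\gamma$, which yields the claimed bound $a\gamma(aM)^{L-i}\|\vy\|_2$. The condition $M>1$ plays no role in the inequality itself; it only makes the bound grow with depth.

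\textbf{Main obstacle.} The difficulty lies in Step 2. The cross terms, meaning the components of $\vz_\ell$ orthogonal to $\vu_{\ell-1,1}$, could in principle cancel the leading term once we project onto the next top direction. The informal alignment only controls the top vectors. I would first try to settle this by reading the formal Assumption~\ref{assum:grad-align-formal} so that alignment is imposed on the propagated vectors themselves, i.e.\ the backpropagated vector at each stage has cosine at least $a$ with the next top singular direction. With that formulation, each step is a clean inequality $\|\mJ^T\vz\|\ge s_1|\vu_1^T\vz|\ge Ma\|\vz\|$ and no cross terms appear. This mirrors how Theorem~\ref{thm:jacobian-product} is stated in terms of $\text{Align}$, so the same convention should apply.
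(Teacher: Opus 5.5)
\textbf{There is a genuine gap, and your proposed fix does not close it.} Your factor bookkeeping matches the statement: $a$ from terminal alignment, $aM$ per layer, $a\gamma$ at the end. Your route also differs from the paper's. The paper asserts $\|(\mJ^{(i+1:L)})^T\vy\|_2\ge a\|\mJ^{(i+1:L)}\|_2\|\vy\|_2$, applies Theorem~\ref{thm:jacobian-product} to get $(aM)^{L-i}/a$, and then multiplies by $a\gamma$.

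The obstacle you flag is real, and Assumption~\ref{assum:grad-align-formal} does not remove it. Its conditions constrain only three things: $\vy$ against $\vu_1^{(L)}$, $\mathbf{G}$ against $\vv_1^{(i+1)}$, and the top singular vectors of adjacent Jacobians. They never constrain the backpropagated vectors. Write $\vz_\ell=\sum_j s_j^{(\ell)}(\vu_{\ell,j}^T\vz_{\ell+1})\vv_{\ell,j}$ with $\vz_{L+1}=\vy$. Then
$\vu_{\ell-1,1}^T\vz_\ell=s_1^{(\ell)}(\vu_{\ell,1}^T\vz_{\ell+1})(\vu_{\ell-1,1}^T\vv_{\ell,1})+\sum_{j\ge 2}s_j^{(\ell)}(\vu_{\ell,j}^T\vz_{\ell+1})(\vu_{\ell-1,1}^T\vv_{\ell,j})$.
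The hypotheses bound only the first term. The remainder comes from the part of $\vu_{\ell-1,1}$ orthogonal to $\vv_{\ell,1}$, which can have norm up to $\sqrt{1-a^2}$, and it can cancel the first term.

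The two-matrix lemma behind Theorem~\ref{thm:jacobian-product} escapes this for two reasons. Its test vector $\vv_{B,1}$ is mapped exactly onto $\vu_{B,1}$. It also needs only the norm bound $\|\mA\vx\|_2\ge\sigma_{A,1}|\vv_{A,1}^T\vx|$. Your chain needs a projection, not a norm, at each subsequent step, and $\vz_\ell$ is no longer a singular vector. So the "same mechanism" does not iterate.

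Your Step~3 has the same defect, and you do not flag it. A large component of $\vz_{i+1}$ along $\vv_{i+1,1}$, together with cosine $\ge a$ between $\mathbf{G}$ and $\vv_{i+1,1}$, gives no lower bound on $|\mathbf{G}^T\vz_{i+1}|$.

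\textbf{No argument can succeed from the stated hypotheses.} Take $i=L-1$ and $\mJ^{(L)}=\mathrm{diag}(M,M')$ with $0<M'<M$. Let every other $\mJ^{(\ell)}$ equal this matrix, so inter-layer alignment is $1$. Set $\vy=(a,\sqrt{1-a^2})^T$ and $\mathbf{G}=\gamma(a,-\sqrt{1-a^2})^T$. Every condition holds, yet $\mathbf{G}^T(\mJ^{(L)})^T\vy=\gamma\bigl(Ma^2-M'(1-a^2)\bigr)$.
\begin{itemize}
\item If $a^2<1/2$ and $M'=Ma^2/(1-a^2)$, this is exactly zero.
\item For any $a\in(0,1)$ and small enough $M'>0$, it is strictly below the claimed $a^2\gamma M$.
\end{itemize}
In this example your Step~2 is vacuous, so reading the assumption as "each propagated $\vz_\ell$ has cosine $\ge a$ with $\vu_{\ell-1,1}$" does not rescue the proof.

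\textbf{What you would need instead.} You also have to replace condition~2 by "$\mathbf{G}$ has cosine $\ge a$ with the actual backpropagated vector $\vz_{i+1}=(\mJ^{(i+1:L)})^T\vy$." With both strengthened hypotheses the proof is clean:
\begin{itemize}
\item First, $\|\vz_L\|_2\ge s_1^{(L)}|\vu_{L,1}^T\vy|\ge aM\|\vy\|_2$.
\item Next, $\|\vz_{\ell-1}\|_2\ge s_1^{(\ell-1)}|\vu_{\ell-1,1}^T\vz_\ell|\ge aM\|\vz_\ell\|_2$, applied over $L-i-1$ steps.
\item Finally, $|\mathbf{G}^T\vz_{i+1}|\ge a\gamma\|\vz_{i+1}\|_2\ge a\gamma(aM)^{L-i}\|\vy\|_2$.
\end{itemize}

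The paper's own proof does not supply this ingredient either. Its Step~2 asserts alignment of $\vy$ with the cumulative Jacobian's top direction. Its Step~3 asserts that the backpropagated signal is ``concentrated'' along $\vv_1^{(i+1)}$. These are exactly the unproved claims at issue. You should therefore state the strengthened assumption explicitly, not present it as a reading of the formal one.
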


The proof is provided in Appendix~\ref{proof:jacobian-to-gradient}. Figure~\ref{fig:jacobian-lb} validates this bound empirically.

\begin{figure}
    \centering
    \includegraphics[width=0.7\linewidth]{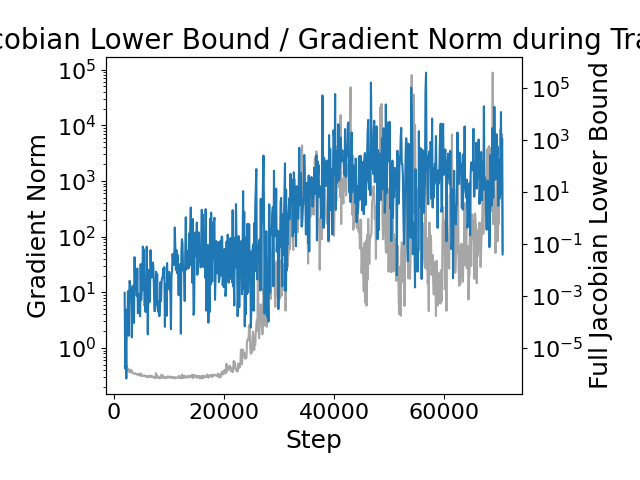}
    \caption{Validation of Theorem~\ref{thm:jacobian-product}: Jacobian product norm lower bound vs.\ actual gradient norm. The theoretical bound closely tracks observed gradient growth.}
    \label{fig:jacobian-lb}
    \vspace{-10pt}
\end{figure}

\begin{theorem}[Total Gradient Norm Lower Bound]
\label{thm:total-gradient}
Summing over all layers:
\begin{equation}
\sum_{i=1}^{L} \left\|\frac{\partial L}{\partial \mW^{(i)}}\right\|_F^2 \geq C \cdot \frac{(aM)^{2L} - 1}{(aM)^2 - 1},
\end{equation}
where $C = a^2 \gamma^2 n_w \left\|\frac{\partial L}{\partial \vh^{(L)}}\right\|_2^2$. When $aM > 1$, this exhibits \textbf{exponential growth} in depth.
\end{theorem}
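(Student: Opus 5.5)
The plan is to obtain Theorem~\ref{thm:total-gradient} by summing the per-layer bound of Theorem~\ref{thm:jacobian-to-gradient} over layers and evaluating a geometric series. The only extra ingredient is a bridge from the gradient with respect to a single weight vector $\hat{\vv}_{out}^{(i)}$ to the full Frobenius norm $\|\partial L/\partial \mW^{(i)}\|_F$. I would interpret $n_w$ as the number of such weight vectors (for example, output columns or rows) in each $\mW^{(i)}$ that satisfy Assumption~\ref{assum:grad-align}; the formal Assumption~\ref{assum:grad-align-formal} is presumably stated per vector.

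\textbf{Step 1: reduce to weight vectors.} Write $\mW^{(i)}$ as a collection of its weight vectors $\hat{\vv}_{out}^{(i,k)}$, $k=1,\dots,n_w$. Since the Frobenius norm squared is the sum of squared Euclidean norms of these vector blocks,
\begin{equation}
\left\|\frac{\partial L}{\partial \mW^{(i)}}\right\|_F^2 \geq \sum_{k=1}^{n_w}\left\|\frac{\partial L}{\partial \hat{\vv}_{out}^{(i,k)}}\right\|_2^2 .
\end{equation}
The inequality becomes an equality if the vectors partition the matrix entries.

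\textbf{Step 2: apply the per-vector bound and sum over layers.} Apply Theorem~\ref{thm:jacobian-to-gradient} to each vector, using uniform constants $a,\gamma,M$. This gives
\begin{equation}
\left\|\frac{\partial L}{\partial \mW^{(i)}}\right\|_F^2 \geq n_w a^2\gamma^2 (aM)^{2(L-i)}\left\|\frac{\partial L}{\partial \vh^{(L)}}\right\|_2^2 = C\,(aM)^{2(L-i)} .
\end{equation}
Summing over $i=1,\dots,L$ and substituting $j=L-i$, which runs from $0$ to $L-1$, gives
\begin{equation}
C\sum_{j=0}^{L-1}\left((aM)^2\right)^j = C\,\frac{(aM)^{2L}-1}{(aM)^2-1},
\end{equation}
valid whenever $aM\neq 1$. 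In the degenerate case $aM=1$ the sum is $CL$, which is also the limit of the right-hand side, so the formula holds with that interpretation. When $aM>1$ the sum is dominated by $(aM)^{2L-2}$ up to the factor $(1-(aM)^{-2})^{-1}$, which is the claimed exponential growth in depth.

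\textbf{Main obstacle: boundary terms and uniformity.} At $i=L$, Theorem~\ref{thm:jacobian-to-gradient} involves an empty Jacobian product, which must be read as the identity. Its bound $a\gamma\|\partial L/\partial \vh^{(L)}\|$ should then follow directly from the terminal alignment condition (3) together with the local gradient bound (1). For $i<L$, the local-Jacobian alignment refers to $\mJ^{(i+1)}$, so every index must have a well-defined successor, or else the terminal condition must be used. Beyond the boundary, the constants $a,\gamma,M$ and the count $n_w$ must hold uniformly across all layers and all weight vectors. If the formal assumption covers only one vector per layer, I would take $n_w=1$, or state that $n_w$ counts the vectors satisfying the assumption. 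Everything else is a routine geometric series.
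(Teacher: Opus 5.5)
Your proposal is correct and follows the paper's proof essentially step for step. The paper likewise writes $\|\partial L/\partial \mW^{(i)}\|_F^2$ as the sum of squared norms of the $n_w$ column-vector gradients, applies Theorem~\ref{thm:jacobian-to-gradient} to each column, and sums the geometric series in $(aM)^2$ via $k=L-i$.

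Your extra care about the case $aM=1$ and the $i=L$ boundary goes beyond the paper, which simply invokes the per-vector bound for every $i$. One correction there: at $i=L$, conditions (1) and (3) alone do not force the local gradient to align with $\partial L/\partial \vh^{(L)}$. So that boundary term does not follow from the assumption directly; it rests on Theorem~\ref{thm:jacobian-to-gradient} as stated.
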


The proof is provided in Appendix~\ref{proof:total-gradient}. From Observations 1 and 2, stable rank collapse and Jacobian alignment together cause $aM > 1$, triggering exponential gradient explosion.

\subsubsection{Summary: The Failure Pathway}

Combining the above results, we establish the complete causal chain:

\begin{enumerate}
\item Low stable rank (\textbf{Observation 1}) $\Rightarrow$ High layer Jacobian 2-norm (Theorems~\ref{thm:srank-jacobian}, \ref{thm:jacobian-attention}, \ref{thm:jacobian-mlp}) under approximately fixed Frobenius norms.
\item Jacobian alignment (\textbf{Observation 2}) + High layer Jacobian norm $\Rightarrow$ High total Jacobian norm (Theorem~\ref{thm:jacobian-product}).
\item High total Jacobian norm + Gradient alignment (Assumption~\ref{assum:grad-align}) $\Rightarrow$ Large weight gradient (training instability) (Theorems~\ref{thm:jacobian-to-gradient}, \ref{thm:total-gradient}).
\end{enumerate}

Taken together, these ingredients provide a \emph{sufficient} mechanistic explanation for how low stable rank combined with Jacobian alignment can lead to training failure in the regimes we study; we do not claim that this mechanism is necessary for all possible failure modes.

\subsection{Part II: The Positive Feedback Mechanism}

Having established why low stable rank and Jacobian alignment lead to training failure, we now analyze why these conditions tend to \emph{intensify} during training.

\subsubsection{Low-Rank Hidden States Lead to Low-Rank Gradients}
\label{sec:theory-feedback}

We first establish that low-rank structure propagates through the network and affects gradient structure.

\begin{theorem}[Low-Rank Propagation in Attention Layers]
\label{thm:lowrank-propagation}
Consider an attention layer with query, key, value, and output projection matrices $\mW_Q, \mW_K, \mW_V, \mW_O$. If the hidden states $\mH^{(\ell-1)}$ and cohidden states $\tilde{\mH}^{(\ell)}$ have rank at most $r$, then the gradients $\nabla_{\mW_Q} L, \nabla_{\mW_K} L, \nabla_{\mW_V} L, \nabla_{\mW_O} L$ all have rank at most $r$.
\end{theorem}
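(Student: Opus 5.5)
The plan is to write each weight gradient explicitly as a sum of outer products that have one factor built from $\mH^{(\ell-1)}$ and the other from the cohidden state. Here the cohidden state $\tilde{\mH}^{(\ell)} = \partial L/\partial \mY$ is the backpropagated gradient at the attention output. Two elementary facts then give the bound: $\mathrm{rank}(\mX\mY)\le\min(\mathrm{rank}\,\mX,\mathrm{rank}\,\mY)$, and the row and column spaces of a product are contained in those of its outer factors.

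With the notation of Theorem~\ref{thm:jacobian-attention}, the forward pass is
\[
\mQ=\mH\mW_Q,\quad \mK=\mH\mW_K,\quad \mV=\mH\mW_V,\quad \mS=\mQ\mK^T/\sqrt{d_k},\quad \mA=\text{softmax}(\mS),\quad \mY=\mA\mV\mW_O .
\]
Reverse-mode differentiation of each linear map gives the following gradients:
\[
\nabla_{\mW_O}L=(\mA\mV)^T\tilde{\mH},\qquad \nabla_{\mW_V}L=\mH^T\mA^T\tilde{\mH}\mW_O^T,
\]
\[
\nabla_{\mW_Q}L=\tfrac{1}{\sqrt{d_k}}\mH^T\mathbf{G}\mK,\qquad \nabla_{\mW_K}L=\tfrac{1}{\sqrt{d_k}}\mH^T\mathbf{G}^T\mQ .
\]
Here $\mathbf{G}=\partial L/\partial \mS$ is obtained by applying the row-wise softmax Jacobian to $\tilde{\mH}(\mV\mW_O)^T$.

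The rank bound then follows term by term.
\begin{itemize}
\item $\nabla_{\mW_V}L$ contains the factor $\mH^T$, so $\mathrm{rank}\le\mathrm{rank}(\mH)\le r$. It also contains $\tilde{\mH}$, which gives the same bound.
\item $\nabla_{\mW_O}L=\mW_V^T\mH^T\mA^T\tilde{\mH}$ contains both $\mH^T$ and $\tilde{\mH}$, so its rank is again at most $r$.
\item $\nabla_{\mW_Q}L$ and $\nabla_{\mW_K}L$ both carry a left factor $\mH^T$, so their rank is at most $\mathrm{rank}(\mH)\le r$.
\end{itemize}
Only the row-space containment of $\mH$ is needed for this last case, so the low rank of $\tilde{\mH}$ is not even required there.

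If the paper's layer includes residual connections or LayerNorm inside the block, I would reduce to the statement as given. The weights enter only through the sublayer above, so the gradients with respect to $\mW_\ast$ are still given by these formulas. The input is then whatever normalized hidden state feeds the projections, and its rank is at most $r$ whenever $\mH^{(\ell-1)}$ has rank at most $r$, provided the LayerNorm is applied row-wise without mean-centering issues. I would state this caveat, or simply interpret $\mH^{(\ell-1)}$ as the projection input.

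The main obstacle is the softmax Jacobian in $\mathbf{G}$. It is a nonlinear, row-dependent map, so $\mathbf{G}$ itself need not have rank at most $r$. I would avoid bounding $\mathrm{rank}(\mathbf{G})$ at all and instead exploit the outer factor $\mH^T$ in the $\mW_Q$ and $\mW_K$ gradients. For multi-head attention, the same argument applies per head; the gradient of each concatenated projection then has columns lying in the column space of $\mH^T$, so the rank-$r$ bound still holds.
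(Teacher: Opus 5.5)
Your proposal is correct and is essentially the paper's argument: you write each gradient as a product carrying $\mH^T$ as a left factor (for $\mW_Q,\mW_K,\mW_V$) or $\tilde{\mH}$ as a right factor (for $\mW_O$), and $\mathrm{rank}(\mX\mY)\le\min(\mathrm{rank}\,\mX,\mathrm{rank}\,\mY)$ finishes it; your explicit formulas through $\mathbf{G}$ just make the paper's abstract $\partial L/\partial\mQ$ and $\partial L/\partial\mK$ concrete. One small correction to your LayerNorm aside: mean-centering is right multiplication by $I-\tfrac{1}{d}\mathbf{1}\mathbf{1}^T$ and cannot raise rank, whereas an additive LayerNorm bias $\beta$ can raise it by one.
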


The proof is provided in Appendix~\ref{proof:lowrank-propagation}.

\begin{remark}
This result extends to MLP layers due to their two-layer structure. For MLP up-projection $\mW_1$: $\nabla_{\mW_1} L = \mH^T \cdot (\text{something})$, so rank is bounded by $\text{rank}(\mH)$. The key insight is that outer-product gradients inherit the rank of their lower-rank factor.
\end{remark}

\subsubsection{Aligned Input-Weight-Output Structure Accelerates Stable Rank Decline}

In general, characterizing how gradient descent affects stable rank dynamics is challenging, as the update direction depends on the complex interplay between input statistics, weight structure, and backpropagated gradients. Here we provide a simplified analysis in a stylized linear setting under strong alignment assumptions that captures the essential feedback mechanism. While these assumptions are restrictive and do not aim to exactly describe full transformer dynamics, they offer theoretical insight into why stable rank can tend to decline during training.
\begin{theorem}[Stable Rank Feedback Mechanism]
\label{thm:srank-feedback}
Consider a linear layer with weight matrix $\mW = \mU \mS \mV^T$ where the input hidden states and output cohidden states satisfy:
\begin{itemize}
\item Input covariance: $\Sigma_{in} = \E[\vh_{in} \vh_{in}^T] = \mV \Lambda_{in} \mV^T$ (aligned with $\mW$'s right singular vectors)
\item Output gradient covariance: $\Sigma_{out} = \E[\tilde{\vh}_{out} \tilde{\vh}_{out}^T] = \mU \Lambda_{out} \mU^T$ (aligned with $\mW$'s left singular vectors)
\end{itemize}
If the correlation between input and output gradient projections $\text{Cov}(\vu_i^T \tilde{\vh}_{out}, \vv_i^T \vh_{in})$ is negative and satisfy
\begin{equation}
    \frac{\text{Cov}(\vu_1^T \tilde{\vh}_{out}, \vv_1^T \vh_{in})}{\text{Cov}(\vu_i^T \tilde{\vh}_{out}, \vv_i^T \vh_{in})}>\frac{s_1}{s_i},\forall 1<i\leq n
\end{equation}
then gradient descent causes the stable rank of $\mW$ to \emph{decrease}.
\end{theorem}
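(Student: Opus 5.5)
The argument is a first-order perturbation computation in the shared singular basis, which reduces the claim to a per-singular-value comparison. For a linear layer $\vh_{out} = \mW \vh_{in}$ with cohidden (backpropagated) state $\tilde{\vh}_{out} = \partial L/\partial \vh_{out}$, the expected gradient is $\nabla_{\mW} L = \E[\tilde{\vh}_{out}\vh_{in}^T]$. One gradient step gives $\mW' = \mW - \eta\,\E[\tilde{\vh}_{out}\vh_{in}^T]$.

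\textbf{Step 1: write the update in the basis $(\mU,\mV)$.} The component of the update along $\vu_i\vv_j^T$ is
\begin{equation}
\vu_i^T \E[\tilde{\vh}_{out}\vh_{in}^T]\vv_j = \E[(\vu_i^T\tilde{\vh}_{out})(\vv_j^T\vh_{in})].
\end{equation}
\textbf{Step 2: reduce to diagonal updates.} The alignment hypotheses ($\Sigma_{in}$ diagonal in $\mV$, $\Sigma_{out}$ diagonal in $\mU$) are what we use to argue that the off-diagonal cross terms $i\neq j$ vanish, or are second order in their effect on singular values. In this stylized setting we will take the joint cross-moment to be diagonal in the paired basis; this is the natural reading of the assumption. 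Assuming centered projections, the diagonal entries equal $c_i := \text{Cov}(\vu_i^T\tilde{\vh}_{out}, \vv_i^T\vh_{in})$. Hence, to first order in $\eta$ (exactly if the update is diagonal), $\mW' = \mU(\mS - \eta\,\mathrm{diag}(c))\mV^T$ and
\begin{equation}
s_i' = s_i - \eta c_i = s_i + \eta |c_i|,
\end{equation}
since each $c_i<0$. If the update is not exactly diagonal, we instead invoke standard first-order singular value perturbation: $\dot s_i = \vu_i^T \dot{\mW}\vv_i = -c_i$ for simple singular values. Off-diagonal terms then affect only the singular vectors at first order.

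\textbf{Step 3: differentiate the stable rank.} Write $\srank = \sum_i s_i^2/s_1^2$. Then
\begin{equation}
\frac{d}{d\eta}\srank\Big|_{\eta=0} = \frac{2}{s_1^2}\sum_{i}s_i\dot s_i - \frac{2\sum_i s_i^2}{s_1^3}\dot s_1 = \frac{2}{s_1^2}\sum_{i>1} s_i^2\Big(\frac{\dot s_i}{s_i} - \frac{\dot s_1}{s_1}\Big).
\end{equation}
With $\dot s_i = |c_i|$, each summand is negative exactly when $|c_1|/s_1 > |c_i|/s_i$, that is, when $c_1/c_i > s_1/s_i$. Both $c$'s are negative, so the ratio is positive and the hypothesis is precisely this condition. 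Every term is therefore strictly negative, the derivative is negative, and for sufficiently small $\eta$ the stable rank decreases. Intuitively, the top singular value grows by a larger relative amount than every other singular value.

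\textbf{Main obstacle.} The delicate point is Step 2. The covariance hypotheses constrain only the marginal second moments of $\vh_{in}$ and $\tilde{\vh}_{out}$, not the cross-moment matrix. The cleanest route is to appeal to first-order perturbation theory, where only the diagonal entries $\vu_i^T\nabla\mW\,\vv_i$ matter; this needs distinct singular values, at least for $s_1$. I would state this, note that repeated $s_i$ with $i>1$ can be handled through the sum of squares within each cluster, and conclude for small step size.
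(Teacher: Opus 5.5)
Your proposal is correct and follows essentially the same route as the paper. Both arguments write the gradient in the $(\mU,\mV)$ basis, take $s_i' \approx s_i - \eta\,\text{Cov}(\vu_i^T\tilde{\vh}_{out},\vv_i^T\vh_{in})$ by first-order perturbation, differentiate $\srank$, and use the ratio hypothesis (with the sign flip from the negative covariances) to make every $i>1$ term negative. Your observation that only the diagonal entries $\vu_i^T\nabla_{\mW}L\,\vv_i$ matter at first order is a cleaner justification than the paper's claim that the cross-moment matrix is approximately diagonal, since the marginal covariance hypotheses alone do not imply that.
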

The proof is provided in Appendix~\ref{proof:srank-feedback}. This theorem should be interpreted as an existence result in a highly aligned regime that illustrates one concrete way in which gradient descent can decrease stable rank, rather than as a statement about typical training trajectories.




\begin{algorithm}[tb]
\caption{MSign Optimizer}
\label{alg:msign}
\begin{algorithmic}
    \STATE {\bfseries Input:} parameters $\theta$, gradients $g$, learning rate $\eta$, period $P$, step $t$, target layers
    \STATE {\bfseries Procedure} StepWithMSign:
    \STATE $\theta \leftarrow \text{BaseOptimizerStep}(\theta, g, \eta)$ \hfill $\triangleright$ Standard optimizer update
    \IF{$t \bmod P == 0$}
        \FOR{each parameter $\mW$ in target layers}
            \IF{$\mW$.ndim $\geq 2$}
                \STATE $F \leftarrow \|\mW\|_F$ \hfill $\triangleright$ Record Frobenius norm
                \STATE $\mU, \mS, \mV^T \leftarrow \text{SVD}(\mW)$
                \STATE $\mW \leftarrow \frac{F}{\|\mU \mV^T\|_F} \mU \mV^T$ \hfill $\triangleright$ Matrix sign with norm restoration
            \ENDIF
        \ENDFOR
    \ENDIF
\end{algorithmic}
\end{algorithm}
\section{The MSign Optimizer: Breaking the Feedback Loop}

Based on our theoretical analysis and empirical observations, we propose a new optimizer to prevent stable rank collapse: the MSign optimizer. This method directly addresses the root cause by periodically restoring the stable rank of weight matrices via the matrix sign operation.

\subsection{The Matrix Sign Operation}

\begin{definition}[Matrix Sign Operator]
For any matrix $\mW \in \R^{m \times n}$ with \emph{reduced} (thin) SVD $\mW = \mU \mS \mV^T$, where $r = \text{rank}(\mW)$, $\mU \in \R^{m \times r}$, $\mS \in \R^{r \times r}$, and $\mV \in \R^{n \times r}$, we define:
\begin{equation}
\signm(\mW) = \mU \mV^T \in \R^{m \times n}.
\end{equation}
This operation sets all non-zero singular values to 1, creating a partial isometry that maximizes stable rank for a given matrix rank. Note that using the reduced SVD ensures the product $\mU \mV^T$ has the correct shape $m \times n$.
\end{definition}

\begin{figure*}[htp]
    \centering
    \includegraphics[width=0.24\linewidth]{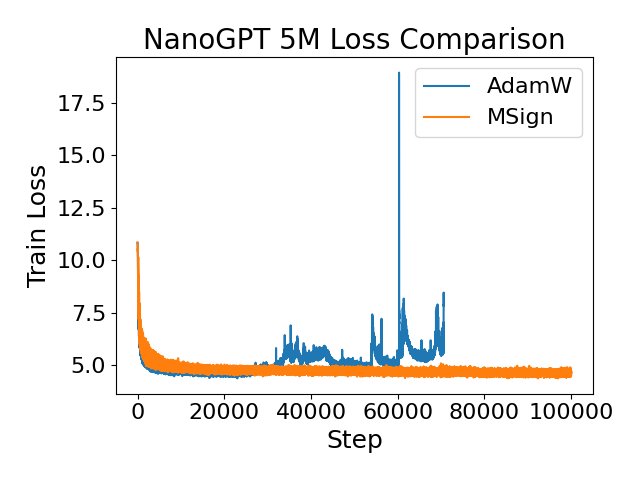}
    \includegraphics[width=0.24\linewidth]{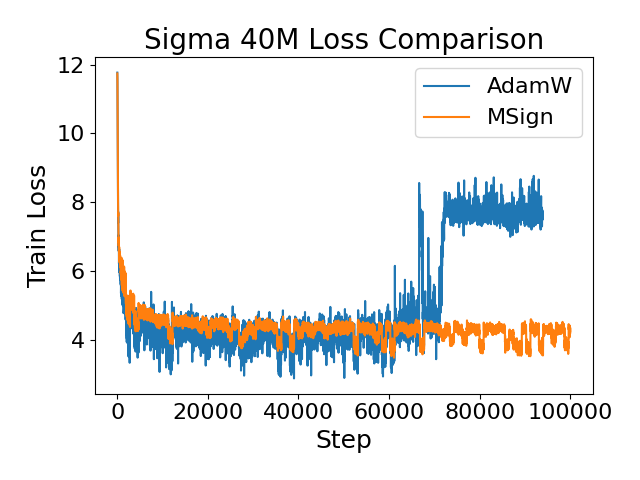}
    \includegraphics[width=0.24\linewidth]{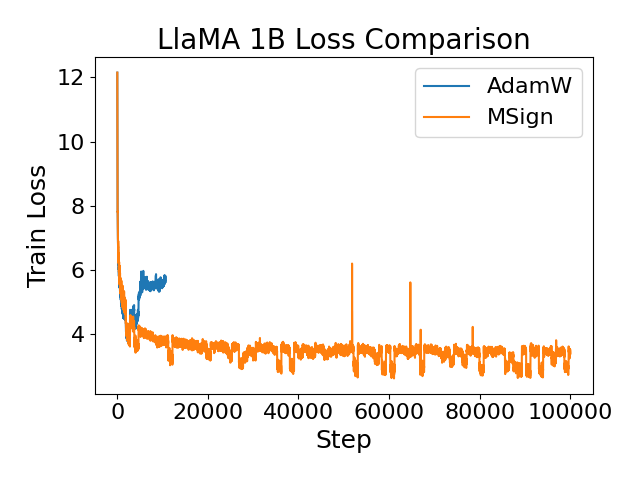}
    \includegraphics[width=0.24\linewidth]{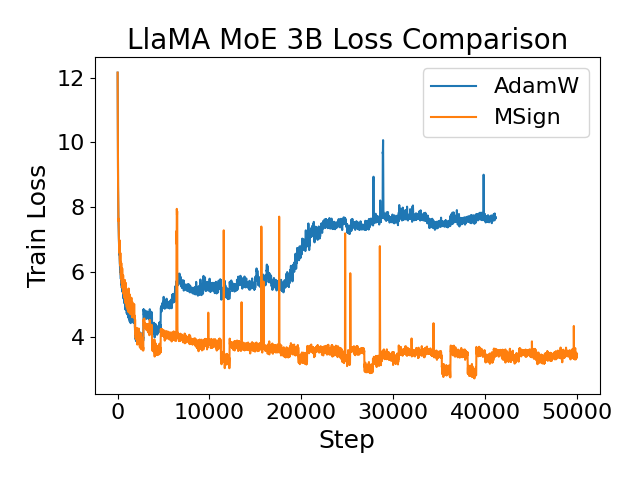}
    \includegraphics[width=0.24\linewidth]{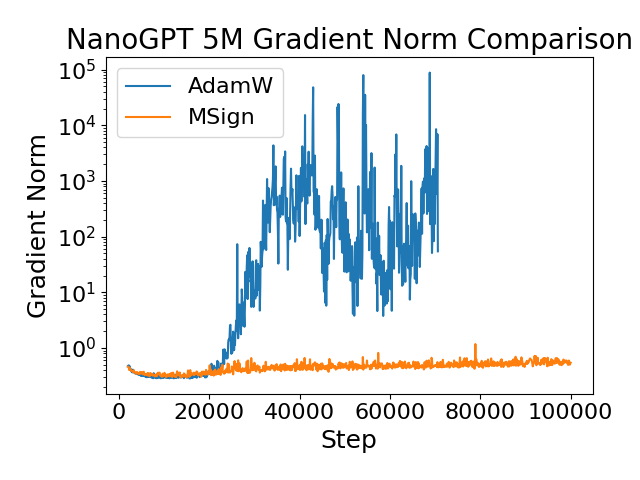}
    \includegraphics[width=0.24\linewidth]{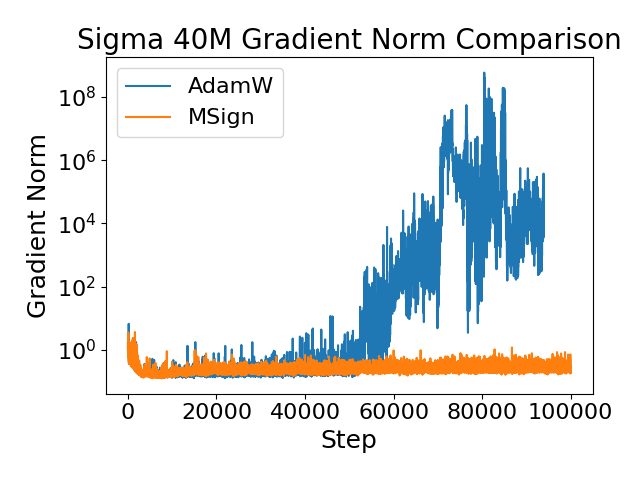}
    \includegraphics[width=0.24\linewidth]{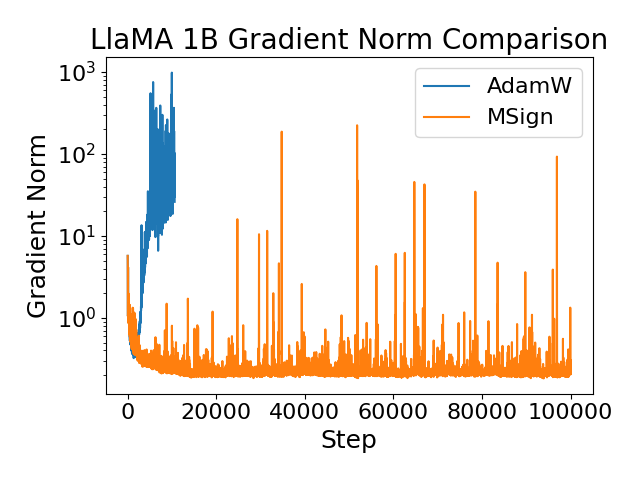}
    \includegraphics[width=0.24\linewidth]{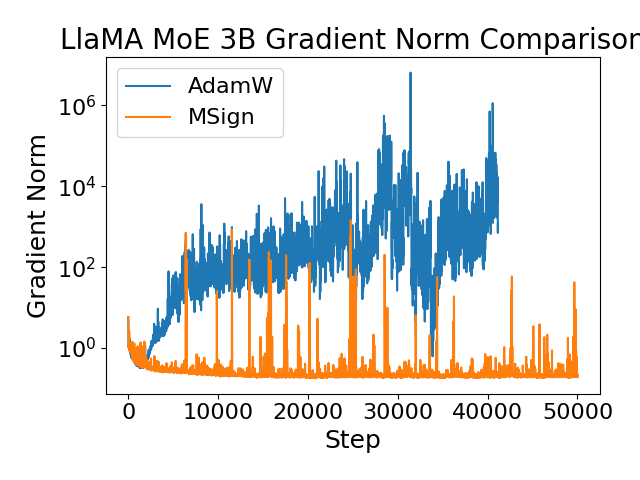}
    \caption{MSign prevents training failures across model scales. \textbf{Top row:} Training loss comparison between baseline (blue) and MSign (orange). Baseline training collapses with sudden loss spikes, while MSign maintains stable convergence. \textbf{Bottom row:} Corresponding gradient norm dynamics. Baseline runs exhibit exponential gradient explosion preceding collapse, while MSign keeps gradient norms bounded throughout training. Training is terminated after failure to conserve computational resources. Results demonstrate that MSign effectively breaks the stable rank collapse feedback loop identified in our theoretical analysis. From left to right: NanoGPT-5M, Sigma-40M, LLaMA-1B, LLaMA-MoE-3B.}
    \label{fig:msign-effect}
    \vspace{-10pt}
\end{figure*}
\subsection{Practical Implementation}
\paragraph{Scaling.} The matrix sign operation alters the scale of the weight matrix, necessitating rescaling. A straightforward approach is to preserve the original Frobenius norm:
\begin{equation}
\mW_{\text{new}} = \frac{\|\mW\|_F}{\|\signm(\mW)\|_F} \signm(\mW).
\end{equation}
In our current implementation, we adopt this Frobenius-norm preserving rescaling for MSign; however, when the stable rank is sufficiently low, this choice may excessively amplify minor singular values, and designing more principled rescaling schemes is left for future work.

\paragraph{Approximation Strategies.} Computing full SVD at every step is prohibitively expensive. We employ several practical strategies to reduce computational cost:

\paragraph{Periodic Application.} 
MSign is applied every $P$ steps (e.g., $P = 100$) rather than at every update. Our experiments demonstrate that this periodic application maintains effectiveness while substantially reducing computational overhead.

\paragraph{Selective Layer Targeting.}
Based on our empirical findings, MSign can be selectively applied to the most critical layers:
\begin{itemize}
\item \textbf{Attention-only}: Apply only to self-attention weights ($\mW_Q, \mW_K, \mW_V, \mW_O$)
\item \textbf{2D-parameters-only}: Apply to all 2D parameter tensors (excluding biases and layer norm parameters)
\end{itemize}
\subsection{Computational Cost Analysis}
\label{sec:msign}
The computational overhead of MSign depends on the frequency and scope of application. Table~\ref{tab:compute-cost} in Appendix~\ref{sec:compute-cost-details} provides a detailed comparison of computational costs for a typical transformer layer.

We define the overhead ratio $R$ as the ratio of additional FLOPs introduced by the MSign operation to the original FLOPs of a standard training step:
\begin{equation}
R = \frac{\text{FLOPs}_{\text{MSign}}}{\text{FLOPs}_{\text{original}} \times P},
\end{equation}
where $P$ is the application period (i.e., MSign is applied every $P$ steps).

From the detailed breakdown in Table~\ref{tab:compute-cost}, we obtain the concrete formula for the application:
\begin{equation}
\label{eq:overhead-ratio}
R =\frac{90d^3}{(72BTd^2+12BT^2d+O(BT^2+BTd)) \times P}.
\end{equation}
For a concrete example, consider a typical configuration with batch size $B=16$, sequence length $T=1024$, hidden dimension $d=2048$, and periodic application every $P=100$ steps. The MSign overhead per application for attention weights is $52d^3 \approx 4.47 \times 10^{11}$ FLOPs. The original per-step FLOPs are approximately $72BTd^2+12BT^2d \approx 5.36 \times 10^{12}$ FLOPs. Thus:
\begin{equation}
R \approx \frac{4.47 \times 10^{11}}{5.36 \times 10^{12} \times 100} \approx 0.08\%.
\end{equation}
\section{Experiment}
We validate our approach across multiple experimental settings of increasing scale. Our experiments demonstrate: (1) the effectiveness of MSign in preventing training failures, and (2) the minimal impact on training throughput.

\subsection{Experimental Setting}
We conduct experiments on four model configurations spanning different scales and architectures: NanoGPT-5M (No RoPE)~\footnote{\href{https://github.com/karpathy/nanoGPT}{https://github.com/karpathy/nanoGPT}}; Sigma-40M (Hybrid Attention)~\cite{qu2025sigma,hu2025sigmatiny}; LLaMA-1B (Full RoPE)~\cite{kumar2025zclip}; LLaMA-MOE-3B (Mixture of Experts) (Modified based on LLaMA-1B). Detailed configurations are provided in Table~\ref{tab:exp-settings} in Appendix~\ref{sec:exp-details}.




\begin{table*}[htp]
\centering
\caption{Training throughput (tokens/second) across model scales. Measured overhead significantly exceeds theoretical predictions due to implementation factors discussed in Section~\ref{sec:throughput-analysis}.}
\label{tab:throughput}
\begin{tabular}{lcccc}
\toprule
 & \textbf{NanoGPT-5M} & \textbf{Sigma-40M} & \textbf{LLaMA-1B} & \textbf{LLaMA-MoE-3B} \\
\midrule
AdamW & 102,708 & 6,504 & 1,742 & 544 \\
MSign & 105,199 & 6,097 & 1,640 & 520 \\
\midrule
Measured Overhead & $-2.4\%$ & $6.7\%$ & $6.2\%$ & $4.6\%$ \\
Theoretical Overhead (Eq.~\eqref{eq:overhead-ratio}) & $<0.01\%$ & $<0.01\%$ & $0.03\%$ & $0.09\%$ \\
\bottomrule
\end{tabular}
\vspace{-5pt}
\end{table*}
\subsection{Main Result}
\paragraph{MSign Prevents Training Failures.}
Figure~\ref{fig:msign-effect} demonstrates that MSign effectively prevents training collapse across all experimental settings. We analyze the results from multiple perspectives.

\textbf{Training Loss Dynamics (Top Row).} The top row of Figure~\ref{fig:msign-effect} shows training loss trajectories. For all four model scales, baseline training (blue curves) exhibits characteristic instability patterns: initial smooth convergence followed by sudden loss spikes and divergence. The collapse timing varies by model scale, NanoGPT-5M fails around step 30k, Sigma-40M around step 50k, LLaMA-1B around step 2k, and LLaMA-MoE-3B around step 3k, but the pattern is consistent across both dense and sparse architectures. In contrast, MSign-enabled training (orange curves) maintains stable convergence throughout, achieving comparable or better final loss values. Notably, MSign is equally effective for the MoE architecture, where the distributed nature of expert computation does not affect the attention-based instability mechanism.

\textbf{Gradient Norm Analysis (Bottom Row).} The bottom row reveals the underlying mechanism. Baseline runs show exponential gradient explosion (reaching $10^1$--$10^7$) immediately preceding each loss spike, confirming that gradient instability is the proximate cause of training failure. With MSign, gradient norms remain bounded within $10^{0}$ throughout training. The periodic structure visible in the MSign curves corresponds to the application period $P=100$, where each MSign application slightly perturbs the optimization trajectory before quickly stabilizing.


\subsection{Throughput Analysis}
\label{sec:throughput-analysis}
Table~\ref{tab:throughput} compares training throughput across model scales. For NanoGPT-5M, the measured overhead ($-2.4\%$) falls within system noise, consistent with the theoretical prediction ($<0.1\%$). For larger models, however, measured overhead (4.6--6.7\%) significantly exceeds theoretical values. This discrepancy arises from implementation factors not captured in FLOPs analysis: (1) \texttt{all\_gather} communication for distributed SVD, (2) disruption of FlashAttention kernel fusion and continuous stream execution, and (3) pipeline bubbles in distributed training. Despite these overheads, the 4--7\% throughput reduction remains modest compared to the computational waste from training failures.



\subsection{Ablation Study}

We conduct detailed ablation studies on the NanoGPT-5M and Sigma-40M configurations to identify the key factors affecting MSign effectiveness.

\paragraph{Ablation 1: Layer Selection: Attention vs. MLP.}
A critical finding is that \textbf{MSign must be applied to attention layers} to prevent training failures. Table~\ref{tab:layer-ablation} shows the results:

\begin{table}[h]
\centering
\caption{Layer selection ablation: test perplexity ($\downarrow$) on NanoGPT-5M and Sigma-40M.}
\label{tab:layer-ablation}
\small
\begin{tabular}{lcc}
\toprule
\textbf{Target Layers} & \textbf{NanoGPT-5M} & \textbf{Sigma-40M} \\
\midrule
AdamW & Failed & Failed \\
\midrule
MSign (All 2D)  & 102.6 & 74.00 \\
MSign (Attention only) & 118.6 & 75.68
 \\
MSign (MLP only)  & Failed & Failed \\
\bottomrule
\end{tabular}
\end{table}

Table~\ref{tab:layer-ablation} shows that applying MSign to MLP layers alone fails to prevent training collapse in both NanoGPT-5M and Sigma-40M, whereas attention-only application successfully stabilizes training. This aligns with our theoretical analysis: attention layers create the inter-layer Jacobian structure that propagates through the network. Applying MSign to all 2D parameters achieves the best perplexity on both models (e.g., 102.6 vs.\ 118.6 on NanoGPT-5M), indicating that including MLP layers significantly improves final model quality.

\paragraph{Ablation 2: Application Period $P$.}
The application period $P$ controls the trade-off between computational overhead and stable rank maintenance. Table~\ref{tab:period-ablation} shows results for different values of $P$ on NanoGPT-5M.

\begin{table}[h]
\centering
\caption{Application period ablation: test perplexity ($\downarrow$) and throughput (tokens/s) across models.}
\label{tab:period-ablation}
\small
\begin{tabular}{lccc}
\toprule
 \textbf{Period $P$}& \multicolumn{2}{c}{\textbf{Test PPL}} & \textbf{Throughput} \\
\cmidrule(lr){2-3} \cmidrule(lr){4-4}
 & \textbf{NanoGPT} & \textbf{Sigma} & \textbf{Sigma} \\
\midrule
$P=10$ & 103.9 & 92.7 & 18,236 \\
$P=100$ & 102.6 & 74.0 & 24,559 \\
$P=1000$ & 99.4 & 75.7 & 25,082 \\
$P=10000$ & 104.2 & \textbf{69.5} & 25,270 \\
$P=100000$ & Failed & Failed & --- \\
\bottomrule
\end{tabular}
\vspace{-10pt}
\end{table}

Table~\ref{tab:period-ablation} demonstrates the robustness of MSign across a wide range of application periods. All tested values of $P$ from 10 to 10,000 successfully prevent training collapse on both models in terms of final perplexity. However, as shown in Figure~\ref{fig:period-ablation} in the Appendix, for NanoGPT-5M with $P=10000$, the training dynamics exhibit noticeable instability: both loss and gradient norm show increased variance and occasional spikes compared to smaller $P$ values, indicating that the stable rank may temporarily drop below safe thresholds between MSign applications. Since $P=100$ already achieves acceptable computational overhead (Section~\ref{sec:throughput-analysis}) while maintaining stable training dynamics, we recommend $P=100$ as the conservative default choice rather than $P=1000$, prioritizing training stability over marginal perplexity improvements.
\section{Conclusion}
We identify and analyze the stable rank collapse feedback loop as a fundamental mechanism underlying LLM training instability. Low weight stable rank amplifies layer Jacobian norms, which combined with inter-layer alignment causes exponential gradient growth. The MSign optimizer breaks this feedback loop by periodically restoring stable rank via the matrix sign operation, effectively preventing training failures with minimal overhead ($<7\%$). Future directions include adaptive scheduling, fused kernels for reduced latency, and extensions to other training pathologies.

While our theoretical results provide a rigorous foundation for understanding the positive feedback mechanism of stable rank collapse (see Theorem~4.12), they rely on strong assumptions—particularly, the uniform negative correlation of input and output gradient projections. These structural conditions  may not universally hold in practice. Explicitly characterizing the full range of scenarios where the feedback loop provably dominates remains an open problem. We leave a complete characterization of these conditions and their relaxation to future work, which will further clarify the generality and boundaries of our theory.


\section*{Impact Statement}
This paper presents work whose goal is to advance the field of machine learning. There are many potential societal consequences of our work, none of which we feel must be specifically highlighted here.


\begin{thebibliography}{32}
\providecommand{\natexlab}[1]{#1}
\providecommand{\url}[1]{\texttt{#1}}
\expandafter\ifx\csname urlstyle\endcsname\relax
  \providecommand{\doi}[1]{doi: #1}\else
  \providecommand{\doi}{doi: \begingroup \urlstyle{rm}\Url}\fi

\bibitem[Arora et~al.(2019)Arora, Cohen, Hu, and Luo]{arora2019exact}
Arora, S., Cohen, N., Hu, W., and Luo, Y.
\newblock Implicit regularization in deep matrix factorization.
\newblock volume~32, 2019.

\bibitem[Ba et~al.(2016)Ba, Kiros, and Hinton]{ba2016layer}
Ba, L.~J., Kiros, J.~R., and Hinton, G.~E.
\newblock Layer normalization.
\newblock \emph{CoRR}, abs/1607.06450, 2016.
\newblock URL \url{http://arxiv.org/abs/1607.06450}.

\bibitem[Chowdhery et~al.(2023)Chowdhery, Narang, Devlin, Bosma, Mishra, Roberts, Barham, Chung, Sutton, Gehrmann, Schuh, Shi, Tsvyashchenko, Maynez, Rao, Barnes, Tay, Shazeer, Prabhakaran, Reif, Du, Hutchinson, Pope, Bradbury, Austin, Isard, Gur-Ari, Yin, Duke, Levskaya, Ghemawat, Dev, Michalewski, Garcia, Misra, Robinson, Fedus, Zhou, Ippolito, Luan, Lim, Zoph, Spiridonov, Sepassi, Dohan, Agrawal, Omernick, Dai, Pillai, Pellat, Lewkowycz, Moreira, Child, Polozov, Lee, Zhou, Wang, Saeta, Diaz, Firat, Catasta, Wei, Meier-Hellstern, Eck, Dean, Petrov, and Fiedel]{chowdhery2022palm}
Chowdhery, A., Narang, S., Devlin, J., Bosma, M., Mishra, G., Roberts, A., Barham, P., Chung, H.~W., Sutton, C., Gehrmann, S., Schuh, P., Shi, K., Tsvyashchenko, S., Maynez, J., Rao, A., Barnes, P., Tay, Y., Shazeer, N., Prabhakaran, V., Reif, E., Du, N., Hutchinson, B., Pope, R., Bradbury, J., Austin, J., Isard, M., Gur-Ari, G., Yin, P., Duke, T., Levskaya, A., Ghemawat, S., Dev, S., Michalewski, H., Garcia, X., Misra, V., Robinson, K., Fedus, L., Zhou, D., Ippolito, D., Luan, D., Lim, H., Zoph, B., Spiridonov, A., Sepassi, R., Dohan, D., Agrawal, S., Omernick, M., Dai, A.~M., Pillai, T.~S., Pellat, M., Lewkowycz, A., Moreira, E., Child, R., Polozov, O., Lee, K., Zhou, Z., Wang, X., Saeta, B., Diaz, M., Firat, O., Catasta, M., Wei, J., Meier-Hellstern, K., Eck, D., Dean, J., Petrov, S., and Fiedel, N.
\newblock {PaLM}: Scaling language modeling with {Pathways}.
\newblock \emph{Journal of Machine Learning Research}, 24\penalty0 (240):\penalty0 1--113, 2023.
\newblock ISSN 1532-4435.

\bibitem[Denil et~al.(2013)Denil, Shakibi, Dinh, Ranzato, and De~Freitas]{denil2013predicting}
Denil, M., Shakibi, B., Dinh, L., Ranzato, M., and De~Freitas, N.
\newblock Predicting parameters in deep learning.
\newblock volume~26, 2013.

\bibitem[Dong et~al.(2021)Dong, Cordonnier, and Loukas]{dong2021attention}
Dong, Y., Cordonnier, J.-B., and Loukas, A.
\newblock Attention is not all you need: Pure attention loses rank doubly exponentially with depth.
\newblock In \emph{International conference on machine learning}, pp.\  2793--2803. PMLR, 2021.

\bibitem[Fort et~al.(2019)Fort, Hu, and Lakshminarayanan]{fort2019deep}
Fort, S., Hu, H., and Lakshminarayanan, B.
\newblock Deep ensembles: A loss landscape perspective.
\newblock 2019.

\bibitem[Glorot \& Bengio(2010)Glorot and Bengio]{glorot2010understanding}
Glorot, X. and Bengio, Y.
\newblock Understanding the difficulty of training deep feedforward neural networks.
\newblock In \emph{Proceedings of the thirteenth international conference on artificial intelligence and statistics}, pp.\  249--256. JMLR Workshop and Conference Proceedings, 2010.

\bibitem[Hu et~al.(2022)Hu, yelong shen, Wallis, Allen-Zhu, Li, Wang, Wang, and Chen]{hu2022lora}
Hu, E.~J., yelong shen, Wallis, P., Allen-Zhu, Z., Li, Y., Wang, S., Wang, L., and Chen, W.
\newblock Lo{RA}: Low-rank adaptation of large language models.
\newblock In \emph{International Conference on Learning Representations}, 2022.
\newblock URL \url{https://openreview.net/forum?id=nZeVKeeFYf9}.

\bibitem[Hu et~al.(2025)Hu, Lin, Yang, Ding, Liu, Jiang, Wang, Chen, Guo, Xiong, Gao, Qu, Su, Cheng, and Gong]{hu2025sigmatiny}
Hu, Q., Lin, Z., Yang, Z., Ding, Y., Liu, X., Jiang, Y., Wang, R., Chen, T., Guo, Z., Xiong, Y., Gao, R., Qu, L., Su, J., Cheng, P., and Gong, Y.
\newblock {Sigma-MoE-Tiny} technical report, 2025.
\newblock URL \url{https://arxiv.org/abs/2512.16248}.

\bibitem[Huh et~al.(2024)Huh, Cheung, Wang, and Isola]{huh2024platonic}
Huh, M., Cheung, B., Wang, T., and Isola, P.
\newblock Position: The platonic representation hypothesis.
\newblock In \emph{Forty-first International Conference on Machine Learning}, 2024.
\newblock URL \url{https://openreview.net/forum?id=BH8TYy0r6u}.

\bibitem[Kaplan et~al.(2020)Kaplan, McCandlish, Henighan, Brown, Chess, Child, Gray, Radford, Wu, and Amodei]{kaplan2020scaling}
Kaplan, J., McCandlish, S., Henighan, T., Brown, T.~B., Chess, B., Child, R., Gray, S., Radford, A., Wu, J., and Amodei, D.
\newblock Scaling laws for neural language models.
\newblock 2020.
\newblock URL \url{https://arxiv.org/abs/2001.08361}.

\bibitem[Kingma \& Ba(2015)Kingma and Ba]{kingma2014adam}
Kingma, D.~P. and Ba, J.
\newblock Adam: {A} method for stochastic optimization.
\newblock In Bengio, Y. and LeCun, Y. (eds.), \emph{3rd International Conference on Learning Representations, {ICLR} 2015, San Diego, CA, USA, May 7-9, 2015, Conference Track Proceedings}, 2015.
\newblock URL \url{http://arxiv.org/abs/1412.6980}.

\bibitem[Kumar et~al.(2025)Kumar, Owen, Chowdhury, and G{"{u}}ra]{kumar2025zclip}
Kumar, A., Owen, L., Chowdhury, N.~R., and G{"{u}}ra, F.
\newblock Zclip: Adaptive spike mitigation for {LLM} pre-training, 2025.
\newblock URL \url{https://doi.org/10.48550/arXiv.2504.02507}.

\bibitem[Li et~al.(2018)Li, Farkhoor, Liu, and Yosinski]{li2018measuring}
Li, C., Farkhoor, H., Liu, R., and Yosinski, J.
\newblock Measuring the intrinsic dimension of objective landscapes.
\newblock In \emph{ICLR (Poster)}, 2018.
\newblock URL \url{https://openreview.net/forum?id=ryup8-WCW}.

\bibitem[Moniri et~al.(2024)Moniri, Lee, Hassani, and Dobriban]{molybog2023theory}
Moniri, B., Lee, D., Hassani, H., and Dobriban, E.
\newblock A theory of non-linear feature learning with one gradient step in two-layer neural networks.
\newblock In \emph{Proceedings of the 41st International Conference on Machine Learning}, ICML'24. JMLR.org, 2024.

\bibitem[Neyshabur et~al.(2017)Neyshabur, Bhojanapalli, McAllester, and Srebro]{NeyshaburBMS17aa}
Neyshabur, B., Bhojanapalli, S., McAllester, D., and Srebro, N.
\newblock A pac-bayesian approach to spectrally-normalized margin bounds for neural networks.
\newblock \emph{CoRR}, abs/1707.09564, 2017.
\newblock URL \url{http://arxiv.org/abs/1707.09564}.

\bibitem[Pascanu et~al.(2012)Pascanu, Mikolov, and Bengio]{pascanu2013difficulty}
Pascanu, R., Mikolov, T., and Bengio, Y.
\newblock On the difficulty of training recurrent neural networks.
\newblock \emph{30th International Conference on Machine Learning, ICML 2013}, pp.\  1310--1318, 11 2012.

\bibitem[Pennington et~al.(2017)Pennington, Schoenholz, and Ganguli]{pennington2017resnet}
Pennington, J., Schoenholz, S.~S., and Ganguli, S.
\newblock Resurrecting the sigmoid in deep learning through dynamical isometry: theory and practice.
\newblock \emph{CoRR}, abs/1711.04735, 2017.
\newblock URL \url{http://arxiv.org/abs/1711.04735}.

\bibitem[Philipp et~al.(2017)Philipp, Song, and Carbonell]{philipp2018exploding}
Philipp, G., Song, D., and Carbonell, J.~G.
\newblock The exploding gradient problem demystified - definition, prevalence, impact, origin, tradeoffs, and solutions.
\newblock \emph{arXiv preprint arXiv:1712.05577}, 2017.

\bibitem[Qu et~al.(2025)Qu, Ren, Cheng, Gao, Wang, Chen, Liu, Zhang, Gong, Xiong, Ding, Jiang, Lin, Guo, and Yang]{qu2025sigma}
Qu, L., Ren, L., Cheng, P., Gao, R., Wang, R., Chen, T., Liu, X., Zhang, X., Gong, Y., Xiong, Y., Ding, Y., Jiang, Y., Lin, Z., Guo, Z., and Yang, Z.
\newblock {SIGMA}: An {AI}-empowered training stack on early-life hardware, 2025.
\newblock URL \url{https://arxiv.org/abs/2512.13488}.

\bibitem[Rudelson \& Vershynin(2007)Rudelson and Vershynin]{rudelson2007sampling}
Rudelson, M. and Vershynin, R.
\newblock Sampling from large matrices: An approach through geometric functional analysis.
\newblock \emph{J. ACM}, 54\penalty0 (4):\penalty0 21–es, July 2007.
\newblock ISSN 0004-5411.
\newblock \doi{10.1145/1255443.1255449}.
\newblock URL \url{https://doi.org/10.1145/1255443.1255449}.

\bibitem[Sanyal et~al.(2020)Sanyal, Torr, and Dokania]{sanyal2020stable}
Sanyal, A., Torr, P.~H., and Dokania, P.~K.
\newblock Stable rank normalization for improved generalization in neural networks and gans.
\newblock In \emph{International Conference on Learning Representations}, 2020.
\newblock URL \url{https://openreview.net/forum?id=H1enKkrFDB}.

\bibitem[Saxe et~al.(2013)Saxe, McClelland, and Ganguli]{saxe2014exact}
Saxe, A.~M., McClelland, J.~L., and Ganguli, S.
\newblock Exact solutions to the nonlinear dynamics of learning in deep linear neural networks.
\newblock volume abs/1312.6120, 2013.
\newblock URL \url{https://api.semanticscholar.org/CorpusID:17272965}.

\bibitem[Trefethen \& Bau~III(1997)Trefethen and Bau~III]{trefethen1997numerical}
Trefethen, L.~N. and Bau~III, D.
\newblock \emph{Numerical Linear Algebra}.
\newblock SIAM, Philadelphia, 1997.

\bibitem[Vershynin(2018)]{vershynin2018high}
Vershynin, R.
\newblock \emph{High-Dimensional Probability: An Introduction with Applications in Data Science}, volume~47.
\newblock Cambridge university press, 2018.

\bibitem[Wortsman et~al.(2024)Wortsman, Liu, Xiao, Everett, Alemi, Adlam, Co-Reyes, Gur, Kumar, Novak, Pennington, Sohl-dickstein, Xu, Lee, Gilmer, and Kornblith]{wortsman2023small}
Wortsman, M., Liu, P.~J., Xiao, L., Everett, K., Alemi, A., Adlam, B., Co-Reyes, J.~D., Gur, I., Kumar, A., Novak, R., Pennington, J., Sohl-dickstein, J., Xu, K., Lee, J., Gilmer, J., and Kornblith, S.
\newblock Small-scale proxies for large-scale transformer training instabilities.
\newblock In \emph{International Conference on Learning Representations}, 2024.

\bibitem[Xiao et~al.(2018)Xiao, Bahri, Sohl-Dickstein, Schoenholz, and Pennington]{xiao2018dynamical}
Xiao, L., Bahri, Y., Sohl-Dickstein, J., Schoenholz, S., and Pennington, J.
\newblock Dynamical isometry and a mean field theory of {CNNs}: How to train 10,000-layer vanilla convolutional neural networks.
\newblock In \emph{International conference on machine learning}, pp.\  5393--5402. PMLR, 2018.

\bibitem[Yang \& Schoenholz(2017)Yang and Schoenholz]{yang2017mean}
Yang, G. and Schoenholz, S.~S.
\newblock Mean field residual networks: On the edge of chaos.
\newblock \emph{CoRR}, abs/1712.08969, 2017.
\newblock URL \url{http://arxiv.org/abs/1712.08969}.

\bibitem[Yang et~al.(2022)Yang, Hu, Babuschkin, Sidor, Liu, Farhi, Ryder, Pachocki, Chen, and Gao]{yang2022tensor}
Yang, G., Hu, E.~J., Babuschkin, I., Sidor, S., Liu, X., Farhi, D., Ryder, N., Pachocki, J., Chen, W., and Gao, J.
\newblock Tensor programs {V}: Tuning large neural networks via zero-shot hyperparameter transfer.
\newblock \emph{CoRR}, abs/2203.03466, 2022.
\newblock \doi{10.48550/ARXIV.2203.03466}.
\newblock URL \url{https://doi.org/10.48550/arXiv.2203.03466}.

\bibitem[Zeng et~al.(2023)Zeng, Liu, Du, Wang, Lai, Ding, Yang, Xu, Zheng, Xia, Tam, Ma, Xue, Zhai, Chen, Zhang, Dong, and Tang]{zeng2022glm}
Zeng, A., Liu, X., Du, Z., Wang, Z., Lai, H., Ding, M., Yang, Z., Xu, Y., Zheng, W., Xia, X., Tam, W.~L., Ma, Z., Xue, Y., Zhai, J., Chen, W., Zhang, P., Dong, Y., and Tang, J.
\newblock {GLM-130B}: An open bilingual pre-trained model.
\newblock 2023.
\newblock URL \url{https://arxiv.org/abs/2210.02414}.

\bibitem[Zhang et~al.(2022)Zhang, Roller, Goyal, Artetxe, Chen, Chen, Dewan, Diab, Li, Lin, et~al.]{zhang2022opt}
Zhang, S., Roller, S., Goyal, N., Artetxe, M., Chen, M., Chen, S., Dewan, C., Diab, M., Li, X., Lin, X.~V., et~al.
\newblock {OPT}: Open pre-trained transformer language models.
\newblock \emph{arXiv preprint arXiv:2205.01068}, 2022.

\bibitem[Zhao et~al.(2024)Zhao, Zhang, Chen, Wang, Anandkumar, and Tian]{galore2023}
Zhao, J., Zhang, Z., Chen, B., Wang, Z., Anandkumar, A., and Tian, Y.
\newblock {GaLore}: Memory-efficient {LLM} training by gradient low-rank projection.
\newblock 2024.

\end{thebibliography}


\newpage
\appendix
\onecolumn

\section{Experimental Details}
\label{sec:exp-details}

\subsection{Model Configurations}

Table~\ref{tab:exp-settings} summarizes the experimental configurations across four model scales used in our experiments.

\begin{table*}[h]
\centering
\caption{Experimental configurations across four model scales}
\label{tab:exp-settings}
\resizebox{\textwidth}{!}{
\begin{tabular}{lcccc}
\toprule
\textbf{Setting} & \textbf{NanoGPT-5M} & \textbf{Sigma-40M} & \textbf{LLaMA-1B} & \textbf{LLaMA-MoE-3B} \\
\midrule
Parameters & 5M & 40M & 1B & 1B (active) / 3B (total) \\
Layers & 24 & 28 & 16 & 16 \\
Hidden dim & 48 & 128 & 2048 & 2048 \\
Attention heads & 6 & 4 & 16 & 16 \\
Attention type & MHA & MHA/MLA alternating & MHA & MHA \\
Architecture & Dense & Dense & Dense & MoE (16 experts, top-4) \\
MLP intermediate & 192 & 448 & 5440 & 1360 per expert \\
Position encoding & None & RoPE (MHA only) & RoPE & RoPE \\
Layernorm & LayerNorm & RMSNorm & RMSNorm & RMSNorm \\
Activation & GELU & SiLU & SiLU & SiLU \\
Initial Std$^{1,2}$ & 0.02 & 0.006 & 0.02 & 0.02 \\
Dataset & OpenWebText & Nemotron-cc & Nemotron-cc & Nemotron-cc \\
Batch size (tokens) & 123k & 66k & 66k & 66k \\
Sequence length & 256 & 2048 & 2048 & 2048 \\
Learning rate$^{3}$ & $6 \times 10^{-4}$ & $3.5 \times 10^{-4}$ & $1 \times 10^{-3}$ & $1 \times 10^{-3}$ \\
Optimizer & AdamW & AdamW & AdamW & AdamW \\
\bottomrule
\end{tabular}
}
\vspace{0.5em}
\small{$^1$For NanoGPT-5M and Sigma-40M, the $\mu$P scaling rules are applied: the initial 2D weights in transformer layers are scaled by $4\times$, and query weights ($\mW_Q$) are initialized to zero.\\
$^2$For NanoGPT-5M and Sigma-40M, the output projection weights ($\mW_O,\mW_{down}$) are further divided by $\sqrt{2\cdot\text{\#layers}}$ as part of the original architecture design.\\
$^{3}$For NanoGPT-5M and Sigma-40M, the $\mu$P scaling rules are applied: the learning rate for 2D weights in transformer layers are scaled by $16\times$, and for embedding weights are scaled by $4\times$.
}
\end{table*}

\subsection{Common Training Settings}

The following hyperparameters and architectural choices are shared across all experiments unless otherwise specified:

\paragraph{Optimizer settings.} We use the AdamW optimizer with $\beta_1 = 0.9$, $\beta_2 = 0.95$, and $\epsilon = 10^{-8}$. Weight decay is set to $0.1$ for all experiments. Gradient clipping with max norm $1.0$ is applied in all experiments. For the learning rate scheduler, we warm up for 2000 steps and linearly decay to $1/10$ learning rate.

\paragraph{Architecture details.} Bias terms are disabled in all linear layers (attention projections and MLP layers). Pre-LayerNorm is used in all transformer blocks.

\subsection{Computational Cost Details}
\label{sec:compute-cost-details}

Table~\ref{tab:compute-cost} provides a detailed breakdown of computational costs for applying MSign to a typical transformer layer. The SVD computation dominates the MSign cost, with complexity $O(d^3)$ per weight matrix. However, when amortized over $P$ steps (typically $P=100$), the overhead becomes negligible compared to the forward and backward pass costs that scale as $O(BTd^2)$.

\begin{table*}[h]
\centering
\caption{Computational cost comparison per layer (hidden dim $d$, intermediate dim $4d$, batch size $B$, sequence length $T$, applied every $P$ steps)}
\label{tab:compute-cost}
\begin{tabular}{llcc}
\toprule
\textbf{Component} & \textbf{FLOPs per step} & \textbf{MSign FLOPs}$^\dagger$ & \textbf{Amortized MSign} \\
\midrule
\midrule
\multicolumn{4}{l}{\textbf{\textit{Attention Block }}} \\
\midrule
$\mW_Q/ \mW_K/ \mW_V/\mW_O$ & $6  BTd^2$ & $4 \times 13d^3$ & $52d^3/P$ \\
Attention computation & $ 12BT^2d+O(BT^2)$ & 0 & 0 \\
Total & $24BTd^2 + 12BT^2d+O(BT^2)$ & $52d^3$ & $52d^3/P$ \\
\midrule
\midrule
\multicolumn{4}{l}{\textit{\textbf{MLP Block}}} \\
\midrule
$\mW_{up}/\mW_{down}$ ($d \to 4d/4d \to d$) & $24BTd^2$ & $2 \times 19d^3$ & $38d^3/P$ \\
Activation & $O(BTd)$ & 0 & 0 \\
Total & $48BTd^2+O(BTd)$ & $38d^3$ & $38d^3/P$ \\
\midrule
\midrule
\multicolumn{4}{l}{\textit{\textbf{Entire Layer }}} \\
\midrule
Original & $72BTd^2+12BT^2d+O(BT^2+BTd)$ & 0 & 0 \\
+ MSign (Attn only) & $72BTd^2+12BT^2d+O(BT^2+BTd)$ & $52d^3$ & $52d^3/P$ \\
+ MSign (All 2D) & $72BTd^2+12BT^2d+O(BT^2+BTd)$ & $90d^3$ & $90d^3/P$ \\
\bottomrule
\end{tabular}
\vspace{0.5em}

\small{$^\dagger$SVD FLOPs computed as $2mn\min(m,n) + 11\min(m,n)^3$ per matrix~\citep{trefethen1997numerical}. For $d \times d$ matrix: $13d^3$; for $d \times 4d$ matrix: $19d^3$.}
\end{table*}
\subsection{Throughput Model Analysis}
\label{sec:throughput-model}

To quantify the relationship between application period $P$ and training throughput, we fit a simple analytical model. Let $f$ denote the baseline computation per token (fixed), and $F$ denote the additional computation per MSign application (fixed). When the period is $P$, the amortized overhead per token is $F/P$. The throughput $T(P)$ (tokens/s) can be modeled as:
\begin{equation}
T(P) = \frac{T_\infty}{1 + r/P}, \quad \text{where } r = F/f,
\end{equation}
and $T_\infty$ is the asymptotic throughput as $P \to \infty$ (i.e., baseline without MSign overhead).

Using the Sigma-40M throughput measurements from Table~\ref{tab:period-ablation}, we perform least-squares fitting by linearizing: $1/T(P) = (1/T_\infty)(1 + r/P)$. This yields:
\begin{equation}
T_\infty \approx 25,\!350 \text{ tokens/s}, \quad r \approx 3.9.
\end{equation}

The fitted model predicts:
\begin{center}
\begin{tabular}{lcccc}
\toprule
$P$ & 10 & 100 & 1000 & 10000 \\
\midrule
Measured & 18,236 & 24,559 & 25,082 & 25,270 \\
Predicted & 18,273 & 24,399 & 25,251 & 25,340 \\
\bottomrule
\end{tabular}
\end{center}

The close agreement validates the model. However, the fitted $r \approx 3.9$ significantly exceeds the theoretical prediction. From Section~\ref{sec:msign}, the FLOPs-based overhead ratio is $R = 52d^3 / (72BTd^2 \cdot P) < 0.1\%$ for typical configurations, implying $r_{\text{theory}} \ll 1$.

This gap arises from the implementation factors discussed in Section~\ref{sec:throughput-analysis}: (1) \texttt{all\_gather} synchronization latency for distributed SVD computation, (2) disruption of FlashAttention kernel fusion and continuous CUDA stream execution, and (3) pipeline bubbles in distributed training. These factors introduce latency-dominated overhead that scales poorly with batch size, explaining why the effective $r$ far exceeds the FLOPs-based prediction. Future work on asynchronous MSign execution and fused SVD kernels could potentially close this gap.

\subsection{Application Period Analysis}
\label{sec:period-analysis}

Figure~\ref{fig:period-ablation} provides detailed training dynamics for different application periods $P$ on NanoGPT-5M. The left figure shows training loss trajectories, while the right figure shows gradient norm evolution.

\begin{figure*}[h]
\centering
\includegraphics[width=0.48\linewidth]{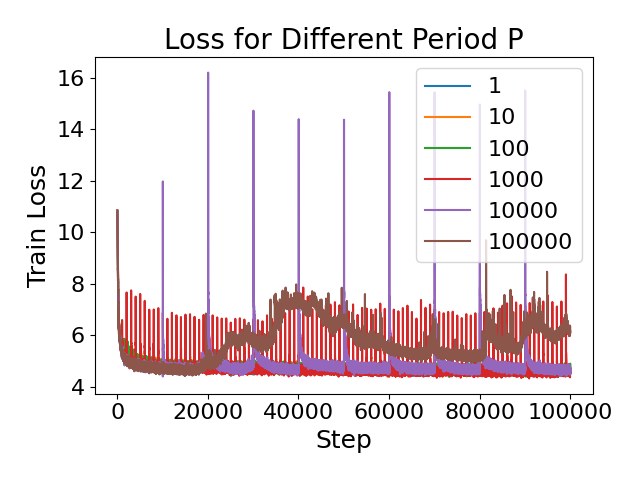}
\includegraphics[width=0.48\linewidth]{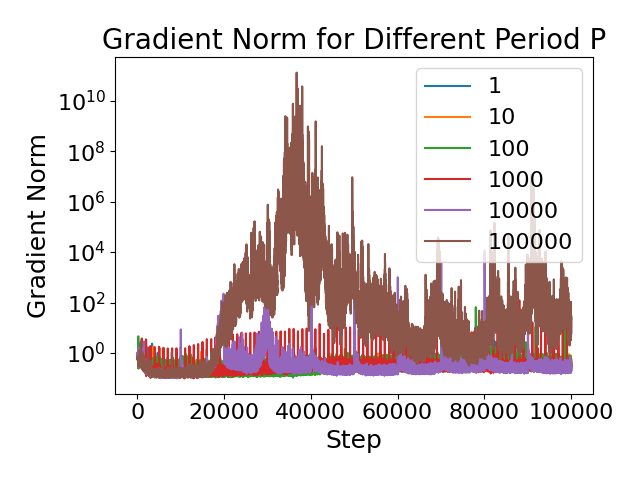}
\caption{Training dynamics under different MSign application periods on NanoGPT-5M. \textbf{Left:} Training loss comparison. \textbf{Right:} Gradient norm comparison. While all periods from $P=10$ to $P=10000$ eventually converge, $P=10000$ exhibits noticeably higher gradient norm in step 20000 to 40000, indicating intermittent instability when MSign applications are too infrequent.}
\label{fig:period-ablation}
\end{figure*}

Several observations emerge from this analysis:
\begin{itemize}
\item \textbf{$P=10$ and $P=100$}: Both show smooth, stable training dynamics with minimal variance in loss and gradient norm. The frequent MSign applications effectively maintain stable rank above critical thresholds throughout training.
\item \textbf{$P=1000$}: Training remains stable but shows slightly increased variance compared to smaller $P$ values. The longer intervals between MSign applications allow some transient stable rank decline, but recovery occurs before instability develops.
\item \textbf{$P=10000$}: While training eventually converges (PPL 104.2), the dynamics exhibit clear signs of intermittent instability, the gradient norm shows periodic spikes and the loss curve has higher variance. This suggests that 10,000 steps is near the boundary where stable rank can decline sufficiently to trigger partial feedback loop activation before the next MSign intervention.
\end{itemize}

These findings justify our recommendation of $P=100$ as the default: it provides a comfortable safety margin against instability while incurring negligible computational overhead.
\section{Proofs of Main Results}
\label{sec:proofs}

\subsection{Proof of Theorem~\ref{thm:jacobian-product} (Jacobian Product Norm Lower Bound)}
\label{proof:jacobian-product}

We first establish a key lemma for two-matrix products.
\begin{lemma}[Alignment-Preserving Product Bound]
    For matrices $\mA, \mB$ with SVD $\mA = \mU_A \mS_A \mV_A^T$ and $\mB = \mU_B \mS_B \mV_B^T$:
\begin{equation}
\|\mA\mB\|_2 \geq \|\mA\|_2 \|\mB\|_2 \cdot |\vv_{A,1}^T \vu_{B,1}|,
\end{equation}
where $\vv_{A,1}$ is the top right singular vector of $\mA$ and $\vu_{B,1}$ is the top left singular vector of $\mB$.
\end{lemma}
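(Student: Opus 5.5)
The plan is a test-vector argument using the variational characterization $\|\mA\mB\|_2 = \max_{\|\vx\|_2=1}\|\mA\mB\vx\|_2$. The natural test vector is $\vx = \vv_{B,1}$, the top right singular vector of $\mB$. Then
\begin{equation*}
\mB\vv_{B,1} = \|\mB\|_2\,\vu_{B,1},
\end{equation*}
so $\|\mA\mB\|_2 \geq \|\mB\|_2\,\|\mA\vu_{B,1}\|_2$. It therefore suffices to prove
\begin{equation*}
\|\mA\vu_{B,1}\|_2 \geq \|\mA\|_2\,|\vv_{A,1}^T\vu_{B,1}|.
\end{equation*}

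To get this, expand the unit vector in the right singular basis of $\mA$ (completed to an orthonormal basis of the domain if $\mA$ is not full column rank): $\vu_{B,1} = \sum_j c_j \vv_{A,j}$ with $c_j = \vv_{A,j}^T\vu_{B,1}$. Then $\mA\vu_{B,1} = \sum_j s_{A,j} c_j \vu_{A,j}$. Since the $\vu_{A,j}$ are orthonormal,
\begin{equation*}
\|\mA\vu_{B,1}\|_2^2 = \sum_j s_{A,j}^2 c_j^2 \geq s_{A,1}^2 c_1^2 = \|\mA\|_2^2\,|\vv_{A,1}^T\vu_{B,1}|^2.
\end{equation*}
The inequality holds because every dropped term is nonnegative. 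Taking square roots and combining with the first step gives the lemma.

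An equivalent route, which I would mention as a check, is to lower bound $\|\mA\mB\|_2 \geq |\vu_{A,1}^T\mA\mB\vv_{B,1}|$. This expression simplifies directly to $s_{A,1}s_{B,1}|\vv_{A,1}^T\vu_{B,1}|$, using $\vu_{A,1}^T\mA = s_{A,1}\vv_{A,1}^T$ and $\mB\vv_{B,1} = s_{B,1}\vu_{B,1}$. This version is a one-line computation, and I may present it as the main proof.

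There is no real obstacle. The only care points are bookkeeping ones: the top singular vectors are defined only up to sign (and are not unique if $s_1$ is repeated), which is why the absolute value appears and why the bound holds for any valid choice. The dimension convention also has to be consistent: the columns of $\mA$ must match the rows of $\mB$, so that $\vv_{A,1}$ and $\vu_{B,1}$ live in the same space.

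For Theorem~\ref{thm:jacobian-product}, the natural next step is induction on the product. The main subtlety there is that the top singular vectors of the partial product $\mJ^{(\ell)}\cdots\mJ^{(1)}$ differ from those of $\mJ^{(\ell)}$, so the alignment hypothesis must be interpreted accordingly. That issue is outside the present lemma.
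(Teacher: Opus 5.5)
Your proposal is correct and follows essentially the same argument as the paper. You plug in the test vector $\vv_{B,1}$ to reduce to bounding $\|\mA\vu_{B,1}\|_2$, then expand $\vu_{B,1}$ in the right singular basis of $\mA$ and drop all but the top term using orthonormality of the $\vu_{A,j}$; your bilinear-form variant $|\vu_{A,1}^T\mA\mB\vv_{B,1}|$ is an equivalent one-line shortcut, and completing the basis when $\mA$ is rank-deficient only tidies the paper's sketch.
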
 
\textit{Proof:} Let $\sigma_{A,1} = \|\mA\|_2$ and $\sigma_{B,1} = \|\mB\|_2$. We have:
\begin{equation}
\|\mA\mB\|_2 \geq \|\mA \mB \vv_{B,1}\|_2 = \|\mA \sigma_{B,1} \vu_{B,1}\|_2 = \sigma_{B,1} \|\mA \vu_{B,1}\|_2.
\end{equation}
Expanding $\vu_{B,1}$ in the basis of $\mA$'s right singular vectors:
\begin{equation}
\|\mA \vu_{B,1}\|_2 = \|\sum_i \sigma_{A,i} (\vv_{A,i}^T \vu_{B,1}) \vu_{A,i}\|_2 \geq \sigma_{A,1} |\vv_{A,1}^T \vu_{B,1}|,
\end{equation}
where the inequality uses $|\vv_{A,1}^T \vu_{B,1}| \leq 1$ and that $\{\vu_{A,i}\}$ is orthonormal.

Since $\text{Align}(\mA, \mB) = |\vv_{A,1}^T \vu_{B,1}|$ by definition, we obtain:
\begin{equation}
\|\mA\mB\|_2 \geq \|\mA\|_2 \|\mB\|_2 \cdot \text{Align}(\mA, \mB).
\end{equation}
Applying this recursively to the Jacobian product:
\begin{align}
\|\mJ^{(L)} \cdots \mJ^{(1)}\|_2 &\geq \|\mJ^{(L)}\|_2 \|\mJ^{(L-1)} \cdots \mJ^{(1)}\|_2 \cdot \text{Align}(L, L-1) \\
&\geq \|\mJ^{(L)}\|_2 \|\mJ^{(L-1)}\|_2 \|\mJ^{(L-2)} \cdots \mJ^{(1)}\|_2 \cdot a^2 \\
&\geq \cdots \geq M^L \cdot a^{L-1} = \frac{(aM)^L}{a}.
\end{align}
$\square$
\subsection{Proof of Theorem~\ref{thm:srank-jacobian} (Stable Rank Controls Jacobian Norm: Linear Layer)}
\label{proof:srank-jacobian}

Consider a linear layer that computes $\vh_{out} = \mW \vh_{in}$, where $\mW \in \R^{m \times n}$ is the weight matrix. The Jacobian of this transformation is:
\begin{equation}
\mJ = \frac{\partial \vh_{out}}{\partial \vh_{in}} = \mW,
\end{equation}
since the linear map $\vh \mapsto \mW\vh$ has constant derivative equal to $\mW$ itself.

The operator norm (also called spectral norm or 2-norm) of a matrix equals its largest singular value:
\begin{equation}
\|\mW\|_2 = \sigma_1(\mW) = \max_{\|\vx\|_2 = 1} \|\mW \vx\|_2.
\end{equation}
This represents the maximum factor by which the matrix can stretch any unit vector, i.e., the worst-case signal amplification.

From the definition of stable rank (Definition 1):
\begin{equation}
\srank(\mW) = \frac{\|\mW\|_F^2}{\|\mW\|_2^2} = \frac{\sum_i \sigma_i^2}{\sigma_1^2},
\end{equation}
where $\sigma_1 \geq \sigma_2 \geq \ldots \geq 0$ are the singular values of $\mW$.

Let $F = \|\mW\|_F$ denote the Frobenius norm, which is fixed by assumption. Substituting into the stable rank definition:
\begin{equation}
\srank(\mW) = \frac{F^2}{\|\mW\|_2^2}.
\end{equation}
Solving for the operator norm:
\begin{equation}
\|\mW\|_2^2 = \frac{F^2}{\srank(\mW)} \quad \Rightarrow \quad \|\mW\|_2 = \frac{F}{\sqrt{\srank(\mW)}}.
\end{equation}
This formula reveals a fundamental trade-off: for a matrix with fixed total ``energy'' (Frobenius norm $F$), the operator norm, which determines signal amplification, is inversely proportional to the square root of stable rank. 

Intuitively, stable rank measures how ``spread out'' the singular values are. When $\srank(\mW) \approx \text{rank}(\mW)$ (high stable rank), the singular values are roughly uniform, and $\|\mW\|_2 \approx F/\sqrt{\text{rank}(\mW)}$ is relatively small. When $\srank(\mW) \approx 1$ (low stable rank), almost all energy is concentrated in the top singular value, and $\|\mW\|_2 \approx F$, the maximum possible for given Frobenius norm. $\square$

\subsection{Proof of Theorem~\ref{thm:jacobian-attention} (Jacobian Norm Bound: Attention Layer)}
\label{proof:jacobian-attention}

We analyze the Jacobian of a single-head attention layer. The output is $\mY = \mA \mH \mW_V \mW_O$, where $\mA = \text{softmax}(\frac{\mH\mW_Q \mW_K^T \mH^T}{\sqrt{d_k}})$. The derivative of $\mY$ with respect to $\mH$ is a 4th-order tensor. To manage this, we use the Fréchet derivative, which is a linear operator $D_\mH(\Delta\mH)$ that approximates the change in $\mY$ for a small perturbation $\Delta\mH$.

Applying the product rule for Fréchet derivatives:
\begin{equation}
D_\mH(\Delta\mH) = [D_\mH\mA(\Delta\mH)](\mH\mW_V\mW_O) + \mA [D_\mH(\mH\mW_V\mW_O)(\Delta\mH)].
\end{equation}
The second term is straightforward: $D_\mH(\mH\mW_V\mW_O)(\Delta\mH) = (\Delta\mH)\mW_V\mW_O$.
Taking operator norms:
\begin{equation}
\|D\mY(\mH)[\Delta\mH]\|_2 \leq \|D\mA(\mH)[\Delta\mH]\|_2 \|\mH\mW_V\mW_O\|_2 + \|\mA\|_2 \|\Delta\mH \mW_V\mW_O\|_2.
\end{equation}
Dividing by $\|\Delta\mH\|_2$ and taking the supremum gives:
\begin{equation}
\|D\mY(\mH)\|_2 \leq \|D\mA(\mH)\|_2 \|\mH\|_2 \|\mW_V\|_2 \|\mW_O\|_2 + \|\mA\|_2 \|\mW_V\|_2 \|\mW_O\|_2.
\end{equation}

\textbf{Bounding the value gradient pathway.} For the second term, define $f(\mH) = \mH \mW_V \mW_O$. Since this is a linear function of $\mH$, its Fréchet derivative at any point $\mH$ is:
\begin{equation}
Df(\mH)[\Delta\mH] = \Delta\mH \mW_V \mW_O.
\end{equation}
The operator norm of this linear map is:
\begin{equation}
\|Df(\mH)\|_2 = \sup_{\|\Delta\mH\|_2=1} \|\Delta\mH \mW_V \mW_O\|_2 = \|\mW_V \mW_O\|_2 \leq \|\mW_V\|_2 \|\mW_O\|_2.
\end{equation}

\textbf{Bounding the attention gradient pathway.} The first term requires analyzing how the attention matrix $\mA$ changes with $\mH$. The attention matrix is computed as:
\begin{equation}
\mA = \text{softmax}\left(\frac{\mS}{\sqrt{d_k}}\right), \quad \text{where } \mS = \mQ \mK^T = \mH \mW_Q \mW_K^T \mH^T.
\end{equation}
Note that $\mS$ depends \emph{quadratically} on $\mH$, making this pathway more complex to analyze.

\textbf{Softmax Fréchet derivative.} Consider the softmax function $f(\vs) = \text{softmax}(\vs) = \frac{e^{\vs}}{\sum_j e^{s_j}}$ applied row-wise to $\mS$. For the $i$-th row, let $a_i = f(\vs_i)$ where $\vs_i$ is the $i$-th row of $\mS$. The Fréchet derivative $Df(\vs_i): \R^n \to \R^n$ is a linear map with matrix representation:
\begin{equation}
Df(\vs_i)[\Delta\vs_i] = (\text{diag}(a_i) - a_i a_i^T) \Delta\vs_i.
\end{equation}
This can be verified by computing: for each component, $\frac{\partial a_{i,k}}{\partial s_{i,j}} = a_{i,k}(\delta_{jk} - a_{i,j})$.

\textbf{Bounding the operator norm.} Let $\mJ_i = \text{diag}(a_i) - a_i a_i^T$. To bound $\|Df(\vs_i)\|_2 = \|\mJ_i\|_2$, we examine its quadratic form. For any $\vx \in \R^n$:
\begin{align}
\vx^T \mJ_i \vx &= \vx^T \text{diag}(a_i) \vx - \vx^T a_i a_i^T \vx \\
&= \sum_j a_{i,j} x_j^2 - \left(\sum_j a_{i,j} x_j\right)^2 \\
&= \mathbb{E}[X^2] - (\mathbb{E}[X])^2 = \text{Var}(X),
\end{align}
where we interpret $j$ as a random index sampled with probability $a_{i,j}$ and $X = x_j$ as the corresponding random variable. Since variance is non-negative, $\mJ_i \succeq 0$.

For a positive semi-definite matrix, the spectral norm equals the largest eigenvalue, which is bounded by the trace:
\begin{equation}
\|Df(\vs_i)\|_2 = \lambda_{\max}(\mJ_i) \leq \text{tr}(\mJ_i) = \sum_j a_{i,j} - \sum_j a_{i,j}^2 = 1 - \|a_i\|_2^2.
\end{equation}
Here we used $\sum_j a_{i,j} = 1$. Factoring gives:
\begin{equation}
\|Df(\vs_i)\|_2 \leq 1-\left\|a_i\right\|_2^2 = (1-\|a_i\|_2)(1+\|a_i\|_2) \leq 2(1-\max a_i),
\end{equation}
where the last inequality follows from $\|a_i\|_2 \geq \max a_i$ and $1 + \|a_i\|_2 \leq 2$.

\textbf{Connecting to logit margin.} The bound $1 - \max a_i$ measures how ``spread out'' the attention distribution is. When attention is sharply peaked, $\max a_i \approx 1$ and $\|Df(\vs_i)\|_2$ is small. We can express this in terms of the logit margin $\gamma_i = \max_j \mS_{i,j} - \text{second\_max}_j \mS_{i,j}$. By definition of softmax:
\begin{equation}
    \max a_i = \frac{e^{\max \vs_i}}{\sum_j e^{\vs_{i,j}}} \ge \frac{e^{\max \vs_i}}{e^{\max \vs_i} + \sum_{j \ne \arg\max_k \vs_{i,k}} e^{\vs_{i,j}}}.
\end{equation}
Since there are at most $n-1$ non-maximal terms, each at most $e^{\max \vs_i - \gamma_i}$:
\begin{equation}
    \max a_i \ge \frac{e^{\max \vs_i}}{e^{\max \vs_i} + (n-1)e^{\max \vs_i - \gamma_i}} = \frac{1}{1 + (n-1)e^{-\gamma_i}}.
\end{equation}
Rearranging and using $\frac{x}{1+x} \leq \min(x, 1)$ for $x \geq 0$:
\begin{equation}
1 - \max a_i \leq \frac{(n-1)e^{-\gamma_i}}{1 + (n-1)e^{-\gamma_i}} \leq \min((n-1)e^{-\gamma_i}, 1)\triangle\chi.
\end{equation}
Since the rows of $\mA$ are computed independently, the Fréchet derivative of the full softmax is block-diagonal and its operator norm is the maximum over rows:
\begin{equation}
    \left\|D(\text{softmax})(\mS)\right\|_2 = \max_i \|Df(\vs_i)\|_2 \leq 2 \chi.
\end{equation}

\textbf{Jacobian of attention logits.} For the attention logits $\mS(\mH) = \mH \mW_Q \mW_K^T \mH^T$, we compute the Fréchet derivative. Using the product rule for bilinear forms:
\begin{equation}
D\mS(\mH)[\Delta\mH] = \Delta\mH \mW_Q \mW_K^T \mH^T + \mH \mW_Q \mW_K^T (\Delta\mH)^T.
\end{equation}
The operator norm is bounded by:
\begin{align}
\|D\mS(\mH)\|_2 &= \sup_{\|\Delta\mH\|_2=1} \|\Delta\mH \mW_Q \mW_K^T \mH^T + \mH \mW_Q \mW_K^T (\Delta\mH)^T\|_2 \\
&\leq \|\mW_Q \mW_K^T \mH^T\|_2 + \|\mH \mW_Q \mW_K^T\|_2 \\
&\leq 2\|\mW_Q\|_2 \|\mW_K\|_2 \|\mH\|_2.
\end{align}

\textbf{Combining via chain rule.} Define $g(\mS) = \text{softmax}(\mS/\sqrt{d_k})$ and recall $\mS(\mH) = \mH \mW_Q \mW_K^T \mH^T$. By the chain rule for Fréchet derivatives:
\begin{equation}
D\mA(\mH)[\Delta\mH] = Dg(\mS(\mH))[D\mS(\mH)[\Delta\mH]],
\end{equation}
and the operator norms satisfy:
\begin{equation}
\|D\mA(\mH)\|_2 \leq \|Dg(\mS)\|_2 \cdot \|D\mS(\mH)\|_2.
\end{equation}
From the softmax analysis, $\|Dg(\mS)\|_2 = \frac{1}{\sqrt{d_k}}\left\|\frac{\partial \text{softmax}}{\partial \mS}\right\|_2 \leq \frac{2\chi}{\sqrt{d_k}}$. Combining:
\begin{equation}
\|D\mA(\mH)\|_2 \leq \frac{2\chi}{\sqrt{d_k}} \cdot 2\|\mW_Q\|_2 \|\mW_K\|_2 \|\mH\|_2 = \frac{4\chi}{\sqrt{d_k}} \|\mW_Q\|_2 \|\mW_K\|_2 \|\mH\|_2.
\end{equation}

\textbf{Final bound.} Substituting the bound on $\|D\mA(\mH)\|_2$ and noting that $\|\mV\|_2 = \|\mH\mW_V\|_2 \leq \|\mH\|_2 \|\mW_V\|_2$:
\begin{equation}
\|D\mY(\mH)\|_2 \leq \|\mA\|_2 \|\mW_V\|_2 \|\mW_O\|_2 + \frac{4\chi \|\mH\|_2^2}{\sqrt{d_k}} \|\mW_Q\|_2 \|\mW_K\|_2 \|\mW_V\|_2 \|\mW_O\|_2.
\end{equation}
$\square$

\paragraph{Discussion: Connection to Stable Rank.}
Substituting $\|\mW_i\|_2 = \|\mW_i\|_F / \sqrt{\srank(\mW_i)}$ into the bound reveals that low stable rank in \emph{any} of the projection matrices amplifies the Jacobian norm. In particular, for attention layers where V and O projections typically have lower stable rank than Q and K, the dominant contribution to gradient magnitude comes through the V-O pathway, which scales as $\|\mW_V\|_2 \|\mW_O\|_2 \propto (\srank(\mW_V) \srank(\mW_O))^{-1/2}$.

\subsection{Proof of Theorem~\ref{thm:jacobian-mlp} (Jacobian Norm Bound: MLP Layer)}
\label{proof:jacobian-mlp}

Consider the forward pass through a two-layer MLP. Given input $\vh \in \R^d$, we first compute the hidden representation:
\begin{equation}
\vz = \vh \mW_1 \in \R^{d_{\text{ff}}},
\end{equation}
where $\mW_1 \in \R^{d \times d_{\text{ff}}}$ projects from hidden dimension $d$ to feedforward dimension $d_{\text{ff}}$ (typically $d_{\text{ff}} = 4d$).

Next, we apply the element-wise activation function $\phi$ (such as GELU or SiLU):
\begin{equation}
\va = \phi(\vz) = [\phi(z_1), \phi(z_2), \ldots, \phi(z_{d_{\text{ff}}})] \in \R^{d_{\text{ff}}}.
\end{equation}

Finally, we project back to the hidden dimension:
\begin{equation}
\vy = \va \mW_2 = \phi(\vz) \mW_2 \in \R^d,
\end{equation}
where $\mW_2 \in \R^{d_{\text{ff}} \times d}$.

To compute the Jacobian $\frac{\partial \vy}{\partial \vh}$, we apply the chain rule through this three-stage computation:
\begin{equation}
\frac{\partial \vy}{\partial \vh} = \frac{\partial \vy}{\partial \va} \cdot \frac{\partial \va}{\partial \vz} \cdot \frac{\partial \vz}{\partial \vh}.
\end{equation}

Each factor has a simple form:
\begin{itemize}
\item $\frac{\partial \vz}{\partial \vh} = \mW_1$: the Jacobian of a linear transformation is the weight matrix itself.
\item $\frac{\partial \va}{\partial \vz} = \text{diag}(\phi'(\vz))$: since $\phi$ acts element-wise, its Jacobian is diagonal with entries $\phi'(z_i)$.
\item $\frac{\partial \vy}{\partial \va} = \mW_2$: again a linear transformation.
\end{itemize}

Combining these:
\begin{equation}
\frac{\partial \vy}{\partial \vh} = \mW_1 \cdot \text{diag}(\phi'(\vz)) \cdot \mW_2.
\end{equation}

To bound the operator norm, we use the submultiplicativity property $\|\mA\mB\|_2 \leq \|\mA\|_2 \|\mB\|_2$:
\begin{equation}
\left\|\frac{\partial \vy}{\partial \vh}\right\|_2 = \|\mW_1 \cdot \text{diag}(\phi'(\vz)) \cdot \mW_2\|_2 \leq \|\mW_1\|_2 \cdot \|\text{diag}(\phi'(\vz))\|_2 \cdot \|\mW_2\|_2.
\end{equation}

The operator norm of a diagonal matrix is the maximum absolute value of its diagonal entries:
\begin{equation}
\|\text{diag}(\phi'(\vz))\|_2 = \max_i |\phi'(z_i)| \leq \sup_{z \in \R} |\phi'(z)| = L_\phi,
\end{equation}
where $L_\phi$ is the Lipschitz constant of $\phi$. For commonly used activations: GELU has $L_{\text{GELU}} \approx 1.13$ (achieved near $z \approx -0.75$), and SiLU has $L_{\text{SiLU}} \approx 1.1$ (achieved near $z \approx 1.28$).

Thus:
\begin{equation}
\left\|\frac{\partial \vy}{\partial \vh}\right\|_2 \leq \|\mW_1\|_2 \cdot L_\phi \cdot \|\mW_2\|_2.
\end{equation}

To express this in terms of stable rank, recall from Definition~1 that $\srank(\mW) = \|\mW\|_F^2 / \|\mW\|_2^2$, which can be rearranged to:
\begin{equation}
\|\mW\|_2 = \frac{\|\mW\|_F}{\sqrt{\srank(\mW)}}.
\end{equation}
Substituting this for both weight matrices:
\begin{equation}
\left\|\mJ_{\text{MLP}}\right\|_2 \leq L_\phi \cdot \frac{\|\mW_1\|_F}{\sqrt{\srank(\mW_1)}} \cdot \frac{\|\mW_2\|_F}{\sqrt{\srank(\mW_2)}} = \frac{L_\phi \|\mW_1\|_F \|\mW_2\|_F}{\sqrt{\srank(\mW_1) \cdot \srank(\mW_2)}}.
\end{equation}
This shows that when stable ranks are low (denominator small) while Frobenius norms remain approximately constant or grow moderately over short training windows (numerator large), the Jacobian norm increases, potentially leading to gradient explosion. $\square$

\paragraph{Discussion: Unified View Across Layer Types.}
Across all three layer types, linear, attention, and MLP, we observe the same fundamental pattern: the layer Jacobian norm is inversely related to the square root of the stable rank. This means that as training progresses and stable ranks collapse (Observation 1), the per-layer Jacobian norms tend to grow, even when Frobenius norms remain approximately bounded over moderate training windows. Combined with Jacobian alignment (Observation 2), this creates the conditions for exponential gradient growth characterized by Theorem~\ref{thm:jacobian-product}.

\subsection{Proof of Theorem~\ref{thm:jacobian-to-gradient} (Weight Gradient Norm Lower Bound)}
\label{proof:jacobian-to-gradient}

We first state the formal version of Assumption~\ref{assum:grad-align}.

\begin{assumption}[Gradient Alignment Conditions (Formal)]
\label{assum:grad-align-formal}
For a deep network with $L$ layers, we assume the following conditions hold for all layers $i \in \{1, \ldots, L\}$:
\begin{enumerate}
    \item \textbf{Uniform local gradient lower bound}: There exists $\gamma > 0$ such that $\left\|\frac{\partial \vh^{(i)}}{\partial \hat{\vv}_{out}^{(i)}}\right\|_2 \geq \gamma$ for all weight vectors $\hat{\vv}_{out}^{(i)}$.
    \item \textbf{Local-Jacobian alignment}: Let $\mJ^{(i+1)} = \frac{\partial \vh^{(i+1)}}{\partial \vh^{(i)}}$ be the \emph{local} layer Jacobian at layer $i+1$, with top right singular vector $\vv_1^{(i+1)}$. The local gradient $\frac{\partial \vh^{(i)}}{\partial \hat{\vv}_{out}^{(i)}}$ satisfies:
    \begin{equation}
    \left|\left\langle \frac{\frac{\partial \vh^{(i)}}{\partial \hat{\vv}_{out}^{(i)}}}{\left\|\frac{\partial \vh^{(i)}}{\partial \hat{\vv}_{out}^{(i)}}\right\|_2}, \vv_1^{(i+1)} \right\rangle\right| \geq a.
    \end{equation}
    \item \textbf{Terminal alignment}: Let $\mJ^{(L)} = \frac{\partial \vh^{(L)}}{\partial \vh^{(L-1)}}$ be the last layer Jacobian, with top left singular vector $\vu_1^{(L)}$. The loss gradient satisfies:
    \begin{equation}
    \left|\left\langle \frac{\frac{\partial L}{\partial \vh^{(L)}}}{\left\|\frac{\partial L}{\partial \vh^{(L)}}\right\|_2}, \vu_1^{(L)} \right\rangle\right| \geq a.
    \end{equation}
\end{enumerate}
\end{assumption}

The proof proceeds by carefully tracking how the loss gradient propagates backward through the network.

Starting from the decomposition~\eqref{eq:grad-decomp}:
\begin{equation}
\left\|\frac{\partial L}{\partial \hat{\vv}_{out}^{(i)}}\right\|_2 = \left\|\left(\frac{\partial \vh^{(i)}}{\partial \hat{\vv}_{out}^{(i)}}\right)^T \left(\mJ^{(i+1:L)}\right)^T \frac{\partial L}{\partial \vh^{(L)}}\right\|_2,
\end{equation}
where $\mJ^{(i+1:L)} = \mJ^{(L)} \mJ^{(L-1)} \cdots \mJ^{(i+1)}$ is the cumulative Jacobian.

The key insight is that the alignment conditions in Assumption~\ref{assum:grad-align-formal} are stated in terms of \emph{local} layer Jacobians $\mJ^{(\ell)}$, which then propagate through the chain rule to yield bounds on the cumulative Jacobian $\mJ^{(i+1:L)}$.

\textbf{Step 1: Terminal alignment implies loss gradient projects onto cumulative Jacobian.}
Let $\mJ^{(L)} = \mU^{(L)} \mS^{(L)} (\mV^{(L)})^T$ be the SVD of the last layer Jacobian. By Assumption~\ref{assum:grad-align-formal}.3, the loss gradient has alignment $\geq a$ with $\vu_1^{(L)}$:
\begin{equation}
\left|\left\langle \frac{\partial L}{\partial \vh^{(L)}}, \vu_1^{(L)} \right\rangle\right| \geq a \left\|\frac{\partial L}{\partial \vh^{(L)}}\right\|_2.
\end{equation}
Applying $(\mJ^{(L)})^T$, the component along $\vu_1^{(L)}$ maps to $\vv_1^{(L)}$ with amplification $\sigma_1^{(L)} = \|\mJ^{(L)}\|_2 \geq M$:
\begin{equation}
\left\|(\mJ^{(L)})^T \frac{\partial L}{\partial \vh^{(L)}}\right\|_2 \geq a \cdot M \cdot \left\|\frac{\partial L}{\partial \vh^{(L)}}\right\|_2.
\end{equation}

\textbf{Step 2: Recursive application through layers.}
By the Jacobian alignment condition (Definition 3.2 and the conditions used in Theorem~\ref{thm:jacobian-product}), adjacent Jacobians have alignment $\geq a$. This means the top right singular direction of $\mJ^{(\ell+1)}$ aligns with the top left singular direction of $\mJ^{(\ell)}$. Applying the alignment-preserving product bound recursively from layer $L$ down to layer $i+1$:
\begin{equation}
\left\|(\mJ^{(i+1:L)})^T \frac{\partial L}{\partial \vh^{(L)}}\right\|_2 \geq a \cdot \|\mJ^{(i+1:L)}\|_2 \cdot \left\|\frac{\partial L}{\partial \vh^{(L)}}\right\|_2.
\end{equation}

Moreover, by Theorem~\ref{thm:jacobian-product}, the cumulative Jacobian norm satisfies:
\begin{equation}
\|\mJ^{(i+1:L)}\|_2 \geq \frac{(aM)^{L-i}}{a}.
\end{equation}

\textbf{Step 3: Local-Jacobian alignment at layer $i$.}
The result $(\mJ^{(i+1:L)})^T \frac{\partial L}{\partial \vh^{(L)}}$ has its energy concentrated along the direction that propagated through the aligned chain. By Assumption~\ref{assum:grad-align-formal}.2, the local gradient $\frac{\partial \vh^{(i)}}{\partial \hat{\vv}_{out}^{(i)}}$ is aligned with the top right singular direction of the \emph{local} Jacobian $\mJ^{(i+1)}$. Since the Jacobian chain is aligned (Theorem~\ref{thm:jacobian-product}), this direction is consistent with the dominant direction of the backpropagated signal.

Combined with the uniform lower bound $\left\|\frac{\partial \vh^{(i)}}{\partial \hat{\vv}_{out}^{(i)}}\right\|_2 \geq \gamma$, this yields:
\begin{equation}
\left\|\left(\frac{\partial \vh^{(i)}}{\partial \hat{\vv}_{out}^{(i)}}\right)^T \left(\mJ^{(i+1:L)}\right)^T \frac{\partial L}{\partial \vh^{(L)}}\right\|_2 \geq a \cdot \gamma \cdot a \cdot \|\mJ^{(i+1:L)}\|_2 \cdot \left\|\frac{\partial L}{\partial \vh^{(L)}}\right\|_2.
\end{equation}

\textbf{Step 4: Final bound.}
Substituting the lower bound from Theorem~\ref{thm:jacobian-product}:
\begin{equation}
\|\mJ^{(i+1:L)}\|_2 \geq \frac{(aM)^{L-i}}{a},
\end{equation}
we obtain:
\begin{equation}
\left\|\frac{\partial L}{\partial \hat{\vv}_{out}^{(i)}}\right\|_2 \geq a^2 \gamma \cdot \frac{(aM)^{L-i}}{a} \cdot \left\|\frac{\partial L}{\partial \vh^{(L)}}\right\|_2 = a \gamma (aM)^{L-i} \cdot \left\|\frac{\partial L}{\partial \vh^{(L)}}\right\|_2.
\end{equation}
$\square$

\paragraph{Discussion: Gradient Decomposition Interpretation.}
The three-part decomposition in~\eqref{eq:grad-decomp} has a clear interpretation:
\begin{itemize}
    \item $\frac{\partial L}{\partial \vh^{(L)}}$: The \textbf{loss gradient} at the final layer output. This is the ``signal'' that backpropagation aims to transmit.
    \item $\frac{\partial \vh^{(L)}}{\partial \vh^{(i)}} = \mJ^{(i+1:L)}$: The \textbf{cumulative Jacobian} from layer $i$ to layer $L$. This acts as the ``transmission channel'' whose gain is bounded by Theorem~\ref{thm:jacobian-product}.
    \item $\frac{\partial \vh^{(i)}}{\partial \hat{\vv}_{out}^{(i)}}$: The \textbf{local gradient} at layer $i$. This represents how changes in the weight affect the layer's output.
\end{itemize}

\paragraph{Discussion: Justification of Alignment Assumptions.}
All three assumptions in Assumption~\ref{assum:grad-align-formal} are stylized but capture a highly aligned regime that we empirically observe near failure in our experiments. Importantly, these assumptions are stated in terms of \emph{local} layer Jacobians $\mJ^{(\ell)}$ rather than the cumulative Jacobian $\mJ^{(i+1:L)}$. This is consistent with our theoretical framework where Theorem~\ref{thm:jacobian-product} establishes how local Jacobian properties (norms and alignments) propagate to yield bounds on the cumulative Jacobian.

The terminal alignment assumption excludes the degenerate case where $\frac{\partial L}{\partial \vh^{(L)}} \approx \mathbf{0}$ or is nearly orthogonal to the last layer Jacobian's dominant direction. The uniform local gradient lower bound excludes the trivial case where the weight has no effect on the output. The local-Jacobian alignment condition requires $\frac{\partial \vh^{(i)}}{\partial \hat{\vv}_{out}^{(i)}}$ to align with the \emph{local} Jacobian $\mJ^{(i+1)}$, which is consistent with the structured gradient flow we observe empirically in regimes with strong Jacobian alignment, but is not guaranteed in arbitrary settings.

\subsection{Proof of Theorem~\ref{thm:total-gradient} (Total Gradient Norm Lower Bound)}
\label{proof:total-gradient}

We aggregate the per-weight-vector bounds from Theorem~\ref{thm:jacobian-to-gradient} across all weights and layers.

Recall that a weight matrix $\mW^{(i)} \in \R^{m \times n}$ can be viewed as a collection of $n_w = n$ column vectors $\{\hat{\vv}_{out,j}^{(i)}\}_{j=1}^{n_w}$, where each vector $\hat{\vv}_{out,j}^{(i)} \in \R^m$. The Frobenius norm of the gradient matrix at layer $i$ equals the sum of squared 2-norms of the gradients with respect to each column:
\begin{equation}
\left\|\frac{\partial L}{\partial \mW^{(i)}}\right\|_F^2 = \sum_{j=1}^{n_w} \left\|\frac{\partial L}{\partial \hat{\vv}_{out,j}^{(i)}}\right\|_2^2.
\end{equation}
This follows from the definition of Frobenius norm: $\|\mA\|_F^2 = \sum_{i,j} A_{ij}^2 = \sum_j \|\va_j\|_2^2$ where $\va_j$ are the columns of $\mA$.

From Theorem~\ref{thm:jacobian-to-gradient}, we have a lower bound for each weight vector:
\begin{equation}
\left\|\frac{\partial L}{\partial \hat{\vv}_{out,j}^{(i)}}\right\|_2 \geq a \gamma (aM)^{L-i} \cdot \left\|\frac{\partial L}{\partial \vh^{(L)}}\right\|_2.
\end{equation}
Squaring both sides (which preserves the inequality since both sides are non-negative):
\begin{equation}
\left\|\frac{\partial L}{\partial \hat{\vv}_{out,j}^{(i)}}\right\|_2^2 \geq a^2 \gamma^2 (aM)^{2(L-i)} \left\|\frac{\partial L}{\partial \vh^{(L)}}\right\|_2^2.
\end{equation}
Note that the right-hand side is independent of the column index $j$. Summing over all $n_w$ columns:
\begin{equation}
\left\|\frac{\partial L}{\partial \mW^{(i)}}\right\|_F^2 = \sum_{j=1}^{n_w} \left\|\frac{\partial L}{\partial \hat{\vv}_{out,j}^{(i)}}\right\|_2^2 \geq \sum_{j=1}^{n_w} a^2 \gamma^2 (aM)^{2(L-i)} \left\|\frac{\partial L}{\partial \vh^{(L)}}\right\|_2^2 = n_w a^2 \gamma^2 (aM)^{2(L-i)} \left\|\frac{\partial L}{\partial \vh^{(L)}}\right\|_2^2.
\end{equation}

Now we sum over all $L$ layers. Since the bound for each layer is independent:
\begin{equation}
\sum_{i=1}^{L} \left\|\frac{\partial L}{\partial \mW^{(i)}}\right\|_F^2 \geq \sum_{i=1}^{L} n_w a^2 \gamma^2 (aM)^{2(L-i)} \left\|\frac{\partial L}{\partial \vh^{(L)}}\right\|_2^2.
\end{equation}
Factoring out terms that do not depend on $i$:
\begin{equation}
\sum_{i=1}^{L} \left\|\frac{\partial L}{\partial \mW^{(i)}}\right\|_F^2 \geq n_w a^2 \gamma^2 \left\|\frac{\partial L}{\partial \vh^{(L)}}\right\|_2^2 \cdot \sum_{i=1}^{L} (aM)^{2(L-i)}.
\end{equation}

The remaining sum can be evaluated by a change of variables. Let $k = L - i$. When $i = 1$, we have $k = L-1$; when $i = L$, we have $k = 0$. Therefore:
\begin{equation}
\sum_{i=1}^{L} (aM)^{2(L-i)} = \sum_{k=0}^{L-1} (aM)^{2k} = 1 + (aM)^2 + (aM)^4 + \cdots + (aM)^{2(L-1)}.
\end{equation}
This is a geometric series with first term $1$, common ratio $r = (aM)^2$, and $L$ terms. When $r \neq 1$, the sum formula gives:
\begin{equation}
\sum_{k=0}^{L-1} r^k = \frac{r^L - 1}{r - 1} = \frac{(aM)^{2L} - 1}{(aM)^2 - 1}.
\end{equation}

When $aM > 1$, we have $(aM)^{2L} \gg 1$ for large $L$, so the sum is approximately:
\begin{equation}
\sum_{k=0}^{L-1} (aM)^{2k} \approx \frac{(aM)^{2L}}{(aM)^2 - 1} = O((aM)^{2L}).
\end{equation}
This shows that the total gradient norm grows \emph{exponentially} with network depth $L$. $\square$

\paragraph{Discussion: Connection to Observations.}
From Observation 1, the stable rank of attention weights drops to near 1, implying $\|\mW\|_2 / \|\mW\|_F \approx 1$ and thus high layer Jacobian norms. From Observation 2, Jacobian alignment $a$ increases toward 1. Together, the product $aM$ exceeds 1, triggering the exponential gradient explosion characterized by Theorem~\ref{thm:total-gradient}. This explains why training becomes unstable as stable rank declines and Jacobians align.

\subsection{Proof of Theorem~\ref{thm:lowrank-propagation} (Low-Rank Propagation in Attention Layers)}
\label{proof:lowrank-propagation}

The key insight is that gradients in neural networks are computed as outer products. For any weight matrix $\mW$ in a linear layer $\vy = \mW \vx$, the gradient with respect to $\mW$ is:
\begin{equation}
\nabla_{\mW} L = \frac{\partial L}{\partial \vy} \vx^T = \tilde{\vy} \vx^T,
\end{equation}
where $\tilde{\vy} = \frac{\partial L}{\partial \vy}$ is the cohidden state (gradient of loss w.r.t. the output).

This outer product structure implies a fundamental constraint: $\text{rank}(\tilde{\vy} \vx^T) \leq \min(\text{rank}(\tilde{\vy}), \text{rank}(\vx^T)) = 1$ for single vectors. When we batch over $B$ samples, the gradient becomes:
\begin{equation}
\nabla_{\mW} L = \sum_{b=1}^B \tilde{\vy}^{(b)} (\vx^{(b)})^T = \tilde{\mY}^T \mX,
\end{equation}
where $\tilde{\mY} \in \R^{B \times d_{out}}$ and $\mX \in \R^{B \times d_{in}}$. By the rank inequality for matrix products:
\begin{equation}
\text{rank}(\tilde{\mY}^T \mX) \leq \min(\text{rank}(\tilde{\mY}^T), \text{rank}(\mX)) \leq \min(\text{rank}(\tilde{\mY}), \text{rank}(\mX)).
\end{equation}

For attention layers specifically, let's trace through each gradient:

\textbf{Query gradient:} The query projection is $\mQ = \mH \mW_Q$. By the chain rule:
\begin{equation}
\nabla_{\mW_Q} L = \mH^T \frac{\partial L}{\partial \mQ}.
\end{equation}
This is a product of $\mH^T \in \R^{d \times n}$ (where $n$ is sequence length) and $\frac{\partial L}{\partial \mQ} \in \R^{n \times d_k}$. If $\mH$ has rank at most $r$, then $\mH^T$ also has rank at most $r$, and hence $\nabla_{\mW_Q} L$ has rank at most $r$.

The same argument applies to $\nabla_{\mW_K} L$ and $\nabla_{\mW_V} L$, since they all have $\mH^T$ as the left factor:
\begin{align}
\nabla_{\mW_Q} L &= \mH^{(\ell-1)T} \frac{\partial L}{\partial \mQ^{(\ell-1)}} \quad \Rightarrow \quad \text{rank} \leq \text{rank}(\mH^{(\ell-1)}) \leq r, \\
\nabla_{\mW_K} L &= \mH^{(\ell-1)T} \frac{\partial L}{\partial \mK^{(\ell-1)}} \quad \Rightarrow \quad \text{rank} \leq r, \\
\nabla_{\mW_V} L &= \mH^{(\ell-1)T} \frac{\partial L}{\partial \mV^{(\ell-1)}} \quad \Rightarrow \quad \text{rank} \leq r.
\end{align}

\textbf{Output projection gradient:} For $\mW_O$, the computation is $\vy = (\mA\mV) \mW_O$. The gradient is:
\begin{equation}
\nabla_{\mW_O} L = (\mA\mV)^T \frac{\partial L}{\partial \vy} = (\text{Attn Output})^T \tilde{\mH}^{(\ell)}.
\end{equation}
If the cohidden states $\tilde{\mH}^{(\ell)}$ have rank at most $r$, then $\nabla_{\mW_O} L$ has rank at most $r$.

This completes the proof: all four attention gradients have rank bounded by $\max(\text{rank}(\mH^{(\ell-1)}), \text{rank}(\tilde{\mH}^{(\ell)})) \leq r$. $\square$

\subsection{Proof of Theorem~\ref{thm:srank-feedback} (Stable Rank Feedback Mechanism)}
\label{proof:srank-feedback}

The gradient of the loss with respect to the weight matrix $\mW$ in a linear layer $\vh_{out} = \mW \vh_{in}$ is given by the outer product of the output gradient (cohidden state) and the input:
\begin{equation}
\nabla_{\mW} L = \E[\tilde{\vh}_{out} \vh_{in}^T],
\end{equation}
where $\tilde{\vh}_{out} = \frac{\partial L}{\partial \vh_{out}}$ is the gradient backpropagated from later layers.

Since $\mW = \mU \mS \mV^T$ (SVD), and we assume the input/output covariances are aligned with these singular vectors, we can project both $\tilde{\vh}_{out}$ and $\vh_{in}$ onto the singular bases. Define the projected coordinates:
\begin{equation}
\alpha_i = \vu_i^T \tilde{\vh}_{out}, \quad \beta_j = \vv_j^T \vh_{in}.
\end{equation}
Then the gradient can be written in the singular vector basis as:
\begin{equation}
\nabla_{\mW} L = \E[\tilde{\vh}_{out} \vh_{in}^T] = \sum_{i,j} \E[\alpha_i \beta_j] \vu_i \vv_j^T = \mU M \mV^T,
\end{equation}
where $M_{ij} = \E[\alpha_i \beta_j] = \text{Cov}(\vu_i^T \tilde{\vh}_{out}, \vv_j^T \vh_{in})$ (assuming zero-mean projections for simplicity).

Under the alignment assumption, the covariance matrix $M$ is approximately diagonal because the input and output gradient are each concentrated along their respective top singular directions. Specifically:
\begin{equation}
M_{ii} = \text{Cov}(\alpha_i, \beta_i) = \rho \sqrt{\E[\alpha_i^2] \E[\beta_i^2]} = \text{Cov}(\vu_i^T \tilde{\vh}_{out}, \vv_i^T \vh_{in}),
\end{equation}
where $\lambda_{out,i} = \E[(\vu_i^T \tilde{\vh}_{out})^2]$ and $\lambda_{in,i} = \E[(\vv_i^T \vh_{in})^2]$ are the variances of the projections.

The gradient descent update is $\mW' = \mW - \eta \nabla_{\mW} L$. Substituting the SVD forms:
\begin{equation}
\mW' = \mU \mS \mV^T - \eta \mU M \mV^T = \mU (\mS - \eta M) \mV^T.
\end{equation}
Since $M$ is approximately diagonal, the updated matrix $\mW'$ has the same singular vectors $\mU, \mV$ (to first order), but with modified singular values.

Using first-order perturbation theory for SVD (valid when $\eta M$ is small relative to the gaps between singular values), the updated singular values are:
\begin{equation}
s_i' = s_i - \eta M_{ii} = s_i - \eta \text{Cov}(\vu_i^T \tilde{\vh}_{out}, \vv_i^T \vh_{in}).
\end{equation}
Since $\text{Cov}(\vu_i^T \tilde{\vh}_{out}, \vv_i^T \vh_{in})$ (negative correlation, typical in gradient descent where the gradient points toward decreasing loss), we have:
\begin{equation}
\Delta s_i = s_i' - s_i = -\eta \text{Cov}(\vu_i^T \tilde{\vh}_{out}, \vv_i^T \vh_{in}) > 0.
\end{equation}
Thus all singular values increase, which is consistent with the observed weight norm growth.

Now we analyze how stable rank changes. Recall $\srank(\mW) = \frac{\sum_i s_i^2}{s_1^2}$. Taking the differential:
\begin{equation}
d(\srank) = \frac{2\sum_i s_i \, ds_i}{s_1^2} - \frac{2(\sum_i s_i^2) s_1 \, ds_1}{s_1^4} = \frac{2}{s_1^3}\left(\sum_i s_i(s_1 \Delta s_i -  s_i \Delta s_1) \right).
\end{equation}
Substituting $\Delta s_i = -\eta \text{Cov}(\vu_i^T \tilde{\vh}_{out}, \vv_i^T \vh_{in})$:
\begin{align}
\Delta \srank &\approx \frac{2}{s_1^3}\left(\sum_i s_i(s_1 \Delta s_i -  s_i \Delta s_1) \right) \nonumber \\
&= \frac{2}{s_1^3}\left(\sum_i -\eta s_i(s_1 \text{Cov}(\vu_i^T \tilde{\vh}_{out}, \vv_i^T \vh_{in}) -  s_i \text{Cov}(\vu_1^T \tilde{\vh}_{out}, \vv_1^T \vh_{in}) \right).
\end{align}

To determine the sign of $\Delta \srank$, we analyze the term in parentheses. Under the assumption, the covariance satisfy:
\begin{equation}
    \frac{\text{Cov}(\vu_1^T \tilde{\vh}_{out}, \vv_1^T \vh_{in})}{\text{Cov}(\vu_i^T \tilde{\vh}_{out}, \vv_i^T \vh_{in})}>\frac{s_1}{s_i},\forall 1<i\leq n
\end{equation}
This gives:
\begin{equation}
s_1 \text{Cov}(\vu_i^T \tilde{\vh}_{out}, \vv_i^T \vh_{in}) >  s_i \text{Cov}(\vu_1^T \tilde{\vh}_{out}, \vv_1^T \vh_{in}),\forall 1<i\leq n.
\end{equation}
This means the term in parentheses is $\leq 0$, and therefore:
\begin{equation}
\Delta \srank = \frac{2}{s_1^3}\left(\sum_i -\eta s_i(s_1 \text{Cov}(\vu_i^T \tilde{\vh}_{out}, \vv_i^T \vh_{in}) -  s_i \text{Cov}(\vu_1^T \tilde{\vh}_{out}, \vv_1^T \vh_{in}) \right) \leq 0.
\end{equation}
Thus \textbf{stable rank decreases} under gradient descent with aligned input-output structure. $\square$

\end{document}